\title{Valid Conformal Prediction for Dynamic GNNs}
\author{%
Ed Davis$^{1}$ \quad Ian Gallagher$^{2}$ \quad Daniel John Lawson$^{1}$ \quad Patrick Rubin-Delanchy$^{3}$ \\
$^1$University of Bristol, U.K. \quad $^2$The University of Melbourne, Australia \\ $^3$School of Mathematics and Maxwell Institute for Mathematical Sciences, \\
University of Edinburgh, U.K.\\
\texttt{\{edward.davis, dan.lawson\}@bristol.ac.uk} \\
\texttt{ian.gallagher@unimelb.edu.au} \\
\texttt{prd@ed.ac.uk} \\
}
\theoremstyle{plain}
\newtheorem{thm}{\protect\theoremname}
\theoremstyle{plain}
\newtheorem{assumption}{\protect\assumptionname\ignorespaces}
\theoremstyle{plain}
\theoremstyle{plain}
\newtheorem{lem}{\protect\lemmaname}
\providecommand{\assumptionname}{A}
\providecommand{\lemmaname}{Lemma}
\providecommand{\propositionname}{Proposition}
\providecommand{\theoremname}{Theorem}
\newcommand\Prob{\mathbb{P}}
\newcommand\R{\mathbb{R}}
\DeclareMathAlphabet\mathbfcal{OMS}{cmsy}{b}{n}
\begin{document}

\maketitle

\begin{abstract}
Dynamic graphs provide a flexible data abstraction for modelling many sorts of real-world systems, such as transport, trade, and social networks. Graph neural networks (GNNs) are powerful tools allowing for different kinds of prediction and inference on these systems, but getting a handle on uncertainty, especially in dynamic settings, is a challenging problem.

In this work we propose to use a dynamic graph representation known in the tensor literature as the unfolding, to achieve valid prediction sets via conformal prediction. This representation, a simple graph, can be input to any standard GNN and does not require any modification to existing GNN architectures or conformal prediction routines. 

One of our key contributions is a careful mathematical consideration of the different inference scenarios which can arise in a dynamic graph modelling context. For a range of practically relevant cases, we obtain valid prediction sets with almost no assumptions, even dispensing with exchangeability. In a more challenging scenario, which we call the semi-inductive regime, we achieve valid prediction under stronger assumptions, akin to stationarity. 

We provide real data examples demonstrating validity, showing improved accuracy over baselines, and sign-posting different failure modes which can occur when those assumptions are violated.
\end{abstract}

\section{Introduction}
Graph neural networks (GNNs) have seen success in a wide variety of application domains including prediction of protein structure \citep{jumper2021highly} and molecular properties \citep{duvenaud2015convolutional,gilmer2017neural}, drug discovery \citep{stokes2020deep}, computer vision \citep{sarlin2020superglue}, natural language processing \citep{peng2018large, wu2023graph}, recommendation systems \citep{berg2017graph, wu2022graph,borisyuk2024lignn}, estimating time of arrival (ETA) in services like Google Maps \citep{derrow2021eta}, and advancing mathematics \citep{davies2021advancing}. They often top leaderboards in benchmarks relating to machine-learning on graphs \citep{OpenGraph}, for a range of tasks such as node, edge, or graph property prediction. Useful resources for getting started with GNNs include the introductions \citep{hamilton2020graph,sanchez-lengeling2021a} and the Pytorch Geometric library \citep{Pytorch}.

Conformal prediction (CP), advanced by Vovk and co-authors \citep{vovk2005algorithmic}, and later expanded on by various researchers in the Statistics community \citep{shafer2008tutorial, lei2013distribution,lei2014distribution,Lei2018regression,foygel2021limits, tibshirani2019conformal,barber2023conformal,gibbs2023conformal,romano2020classification}, is an increasingly popular paradigm for constructing provably valid prediction sets from a `black-box' algorithm with minimal assumptions and at low cost. Several recent papers have brought this idea to quantifying uncertainty in GNNs \citep{huang2024temporal,clarkson2023distribution, zargarbashi2023conformal}.

There are many applications of significant societal importance, e.g. cyber-security \citep{bilot2023graph,he2022illuminati,bowman2021towards}, human-trafficking prevention \citep{szekely2015building,nair2024t}, social media \& misinformation \citep{phan2023fake}, contact-tracing \citep{tan2022deeptrace}, patient care \citep{liu2020heterogeneous,boll2024graph} where we would like to be able to use GNNs, with uncertainty quantification, on dynamic graphs. 

The added time dimension in dynamic graphs causes serious issues with existing GNN + conformal methodology, broadly relating to de-alignment between embeddings across time points (see Figure~\ref{fig:contribution_overview}), resulting in large or invalid prediction sets in even simple and stable dynamic regimes. The unsolved problem we address, in a nutshell, is guaranteeing that two nodes behaving in the same way at distinct points in time are embedded in an exchangeable way, regardless of global network dynamics.

Inspired by a concurrent line of statistical research on spectral embedding and tensor decomposition \citep{zhang2018tensor,cape2019two,abbe2020entrywise,rubin2022statistical,jones2020multilayer,NEURIPS2021_5446f217,agterberg2022estimating}, we propose to use one of the \emph{unfoldings} \citep{de2000multilinear} of the tensor representation of the dynamic graph as input to a standard GNN, and demonstrate validity in both transductive and semi-inductive regimes.

\paragraph{Related work.} The use of (tensor) unfoldings in GNNs is new. At present, to our knowledge, the only paper studying conformal inference on dynamic graphs is \citep{zargarbashi2023conformal}, but the paper considers a strictly growing graph, which is very different from our meaning, in which a `dynamic graph' has edges which appear and disappear in complex and random ways, as in all of our examples and in the applications alluded to above. The motivation for our work derives from earlier contributions studying CP on static graphs using GNNs \citep{huang2024temporal,clarkson2023distribution,zargarbashi2023conformal} but our theory is distinct from those contributions, with arguments closer to \citep{barber2023conformal}. In the dynamic graph literature, unfoldings have exclusively been proposed for unsupervised settings \citep{jones2020multilayer,NEURIPS2021_5446f217,agterberg2022estimating,davis2023,wang2023multilayer}, particularly spectral embedding, and have never been applied to conformal inference.

Our contribution should be viewed as a novel \emph{interface} between CP and GNNs, rather than an improvement of either. This interface allows for a modular system design in dynamic graph inference in which both the GNN and CP modules can be chosen and optimised independently. In view of the rapid development of both fields, we have opted against fine-tuning to allow more transparent, stable, and reproducible performance assessments using recognised and standard baselines for GNNs and CP: Graph Convolutional Networks (GCN) \citep{kipf2016semi} and Graph Attention Networks (GAT) \citep{velivckovic2017graph} for GNNs and APS \citep{romano2020classification} for CP. Our theory applies to the \emph{representation} of the dynamic graph, not the choice of GNN or CP procedures.

\begin{figure}[htb] 
    \centering
    \includegraphics[width=.94\linewidth]{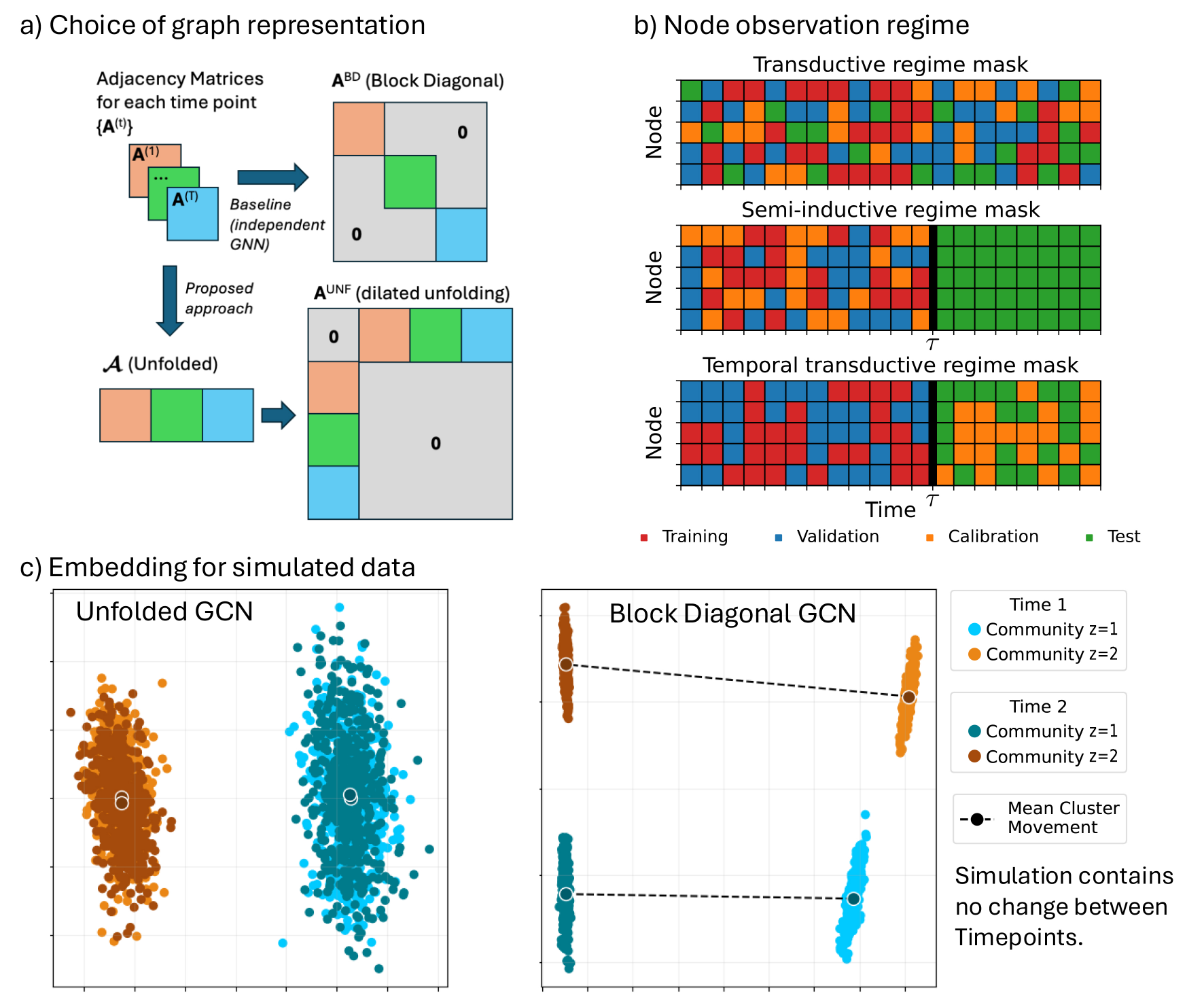}
    \caption{Contribution overview. a) This paper is about the representation of the collection of adjacency matrix snapshots. The baseline (current practice) approach treats these as independent and can be viewed as padding a `block-diagonal' matrix with zeroes. Unfolding instead column concatenates which links nodes to themselves over time. Dilation results in a square symmetric matrix. b) Which data are available at training time affects performance; we report results for transductive (all time-points are exchangeable in terms of test/train split), semi-inductive (a future period is reserved for testing), or temporal transductive (a future period is reserved for testing and calibration). c) Simulation of an i.i.d. stochastic block model showing the embedding after applying PCA. The models were trained with transductive masks. Block diagonal GCN appears to encode a significant change over time despite there being none. The embedding from UGCN is exchangeable over time, as would be expected.}
    \label{fig:contribution_overview}
\end{figure}


\section{Theory \& Methods}\label{sec:t_and_m}
{\bf Problem setup.} In this paper, a dynamic graph $G$ is a sequence of $T$ graphs over a globally defined nodeset $[n] = \{1, \ldots, n\}$, represented by adjacency matrices $\+A^{(1)}, \ldots, \+A^{(T)}$. Technically, all we require is that the $\+A^{(t)}$ have the same number of rows, $p$, allowing for directed, bipartite and weighted graphs. However, the reader may find it easier to assume the $\+A^{(t)}$ are $n \times n$ binary symmetric matrices corresponding to undirected graphs, in which case $p=n$.

Along with the dynamic graph $G$, we are given labels for certain nodes at certain points in time, which could describe a state or behaviour. Our task is to predict the label for a different node/time pair, whose label is hidden or missing. Our prediction must take the form of a set which we can guarantee contains the true label with some pre-specified probability $1-\alpha$ (e.g. 95\%). 

We track the labelled and unlabelled node/time pairs in this problem using a fixed sequence $\nu$ where for each $(i,t) \in \nu$ we assume existence of
\begin{enumerate}
\item a class label $Y^{(t)}_i \in \*Y$, which may not be observed, in some discrete set of cardinality $d$;
\item a set of attributes, which is observed, and represented by a feature vector $X^{(t)}_{i} \in \R^c$;
\item a corresponding column in $\+A^{(t)}$, denoted $\+A^{(t)}_i$, with at least one non-zero entry.
\end{enumerate}
For a pair $w = (i,t) \in \nu$, we will use the notation $X_{w} \coloneqq X^{(t)}_i$, $Y_{w} \coloneqq Y^{(t)}_i$, $\+A_{w} \coloneqq \+A^{(t)}_i$. 

The data are split into four sets: $\nu^{(\text{training})}$ to learn the GNN parameters; $\nu^{(\text{validation})}$ for out-of-sample performance evaluation of the GNN; $\nu^{(\text{calibration})}$ to train the conformal model, and $\nu^{(\text{test})}$ is the target for conformal inference.

Clearly, the theory and methodology of this paper are applicable to multiple test nodes, but for simplicity in this section we assume $\nu^{(\text{test})}$ comprises only one node.

From these objects we define three versions of the inference problem. 




{\it Transductive regime.} The test label is missing completely at random allowing training, validation, and calibration at any time (Figure \ref{fig:contribution_overview}b, top).

{\it Temporal transductive regime.} Training and validation are undertaken up to some time $\tau$. The test label is missing completely at random after time $\tau$, allowing calibration from this interval (Figure \ref{fig:contribution_overview}b, bottom).

{\it Semi-inductive regime.} Training and validation is undertaken up to some time $\tau$. The test label occurs after time $\tau$, corresponding to a fixed node $i$, preventing calibration from this interval (Figure \ref{fig:contribution_overview}b, middle).

In either of the transductive regimes, a valid confidence set can be constructed with no further assumptions (Algorithm~\ref{alg:split_conformal}). We believe this could be exciting because we view the temporal transductive regime, in particular, as an extremely common scenario. The semi-inductive regime is more challenging, and reflects a scenario where we have only historical labels. Here, we will need to invoke further assumptions such as symmetry and exchangeability, familiar in the conformal prediction literature.




Formally, for some $m < |\nu| - 1$, we reorder the dataset $\nu$ so that the first $m$ node/time pairs correspond to the calibration and test sets, while the remaining pairs correspond to the training and validation sets. For both transductive regimes, the missing label is missing at random among the first $m$ node/time pairs of $\nu$; for the semi-inductive regime, the missing label corresponds to a fixed node/time pair in this set. Let $\nu^{(\text{test})} \coloneqq \{ w_\text{test} \}$ denote the missing node/time pair, $\nu^{(\text{calibration})} \coloneqq \{w_{\ell}, \ell \leq m\} \setminus \nu^{(\text{test})}$, and $\nu^{(\text{training})} \cup \nu^{(\text{validation})} \coloneqq \{w_{\ell}, \ell > m\}$.

{\bf Proposed approach.} We will repurpose a standard GNN, designed for static graphs, to produce dynamic graph embeddings with desirable exchangeability properties. Let $\*G$ denote a GNN taking as input an adjacency matrix corresponding to an undirected graph on $n$ nodes, along with node attributes and node labels, and returning an \emph{embedding} $\hat{\+V} \in \R^{n \times d}$. Typically, the $i$th row of $\hat{\+V}$ predicts the label of node $i$ through
\[\hat \Prob(\text{node $i$ has label $k$}) = \left[\text{softmax}(\hat{\+V}_{i1}, \ldots, \hat{\+V}_{id})\right]_k.\]
To represent the adjacency matrix we apply dilated unfolding \citep{davis2023} representing the dynamic graph $G$ in the form
\begin{equation*}
    \mathbf{A}^{\text{UNF}} \coloneqq \begin{bmatrix}
        \mathbf{0} & \mathbfcal{A} \\
        \mathbfcal{A}^\top & \mathbf{0} 
    \end{bmatrix}  \in \R^{(p + nT) \times (p + nT)} 
\end{equation*}
where $\mathbfcal{A} = \left(\mathbf{A}^{(1)}, \dots, \mathbf{A}^{(T)}\right)
\in \R^{p \times nT}
$ (column-concatenation) containing $p$ global (i.e. not t dependent) nodes and $nT$ temporal nodes (each a node/time pair).

The GNN takes as input a matrix $X^\text{UNF} \in \R^{(p + nT) \times c}$ containing $c$-dimensional attributes for all nodes in the dilated unfolding. 
For networks without attributes this matrix is the $(p + nT) \times (p + nT)$ identity matrix. The GNN also requires training and validation labels $Y_{w}, w \in \nu^{(\text{training})} \cup \nu^{(\text{validation})}$, where global nodes are not assigned a label.
The resulting output splits into embeddings
\begin{equation*}
    \begin{bmatrix}
        \hat{\mathbf{U}}^{\text{UNF}} \\
        \hat{\mathbf{V}}^{\text{UNF}} 
    \end{bmatrix} \coloneqq \mathbfcal{G} \left( \mathbf{A}^{\text{UNF}}, X^\text{UNF}, \{Y_w : w \in \nu^{(\text{training})} \cup \nu^{(\text{validation})}\} \right), 
\end{equation*}
where $\hat{\mathbf{U}}^{\text{UNF}} \in \R^{p \times d}$ contains global representations and the embedding $\hat{\mathbf{V}}^\text{UNF} \in \R^{nT \times d}$ contains a representation of each node/time pair, of principal interest here. 


Finally, we shall require some non-conformity score function $r : \R^d \times \*Y \rightarrow \R$, with the convention that $r(\hat y, y)$ is large when $\hat y$ is a poor prediction of $y$. In practice, we make use of the adaptive non-conformity measures detailed by \citet{romano2020classification}.

With these ingredients in place, we are in a position to obtain a prediction set $\hat C$ for $w_{\text{test}}$, the computation of which is detailed in Algorithm~\ref{alg:split_conformal}. The notation leaves implicit the dependence of the set $\hat C$ on the observed data; explicitly, the set $\hat C$ is a deterministic function of the dynamic graph $G$, the attributes $(X_{w}, w \in \nu)$, and the labels $(Y_{w}, w \in \nu \setminus \nu^{(\text{test})})$.

\begin{algorithm}[h!]
 \caption{Split conformal inference} \label{alg:split_conformal}
 \begin{flushleft}
   \textbf{Input:} Dynamic graph $G$, node/time pairs $\nu =\nu^{(\text{training})}\cup \nu^{(\text{validation})} \cup \nu^{(\text{calibration})} \cup\nu^{(\text{test})}$, attributes $(X_{w}, w \in \nu)$, labels $(Y_{w}, w \in \nu \setminus \nu^{(\text{test})})$, confidence level $\alpha$, non-conformity function $r$
 \end{flushleft}
 \begin{algorithmic}[1]
   \State Learn $\hat{\+V}^{\text{UNF}}$ based on $\nu^{(\text{training})}$ and $\nu^{(\text{validation})}$
   \State Let the initial confidence set be $\hat C = \*Y$
   \State Let $R_w = r(\hat{\+V}^{\text{UNF}}_w, Y_{w})$ for $w \in \nu^{(\text{calibration})}$
   \State Let
   \[\hat q = \text{$\lfloor \alpha (m) \rfloor$ largest value in $R_w, w \in \nu^{(\text{calibration})}$}\]   
  \For{$y \in \*Y$}
  \State Let $R_{\text{test}} = r(\hat{\+V}^{\text{UNF}}_{w_{\text{test}}}, y)$
  \State Remove $y$ from $\hat C$ if $R_{\text{test}} \geq \hat q$
  \EndFor
\end{algorithmic}
 \begin{flushleft}
   \textbf{Output:} Prediction set $\hat C$
 \end{flushleft}
\end{algorithm}

{\bf Theory.} As mentioned earlier, the transductive regime requires no further assumptions.
\begin{lem}\label{lem:validity}
  In the transductive regime, the prediction set output by Algorithm~\ref{alg:split_conformal} is valid, that is,
  \[\Prob(Y_{w_{\text{\rm test}}} \in \hat C) \geq 1-\alpha.\]
\end{lem}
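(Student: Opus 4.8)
The plan is to reduce the statement to the standard split-conformal counting bound, the only setting-specific point being to check that the non-conformity scores of the first $m$ node/time pairs are invariant to which of those pairs is held out. First I would record that, by construction, the learned embedding $\hat{\+V}^{\text{UNF}}$ is a deterministic function of the dilated unfolding $\+{A}^{\text{UNF}}$, the attribute matrix $X^{\text{UNF}}$, and the labels $\{Y_w : w\in\nu^{(\text{training})}\cup\nu^{(\text{validation})}\}$; since $\nu^{(\text{training})}\cup\nu^{(\text{validation})} = \{w_\ell : \ell>m\}$ is a fixed set that does not depend on which of $w_1,\dots,w_m$ is designated $w_{\text{test}}$, the GNN never sees the calibration or test labels, and $\hat{\+V}^{\text{UNF}}$ --- hence each score $R_{w_\ell} = r(\hat{\+V}^{\text{UNF}}_{w_\ell}, Y_{w_\ell})$ with $\ell\le m$ --- is the same under every train/test assignment. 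So, conditioning on all observed data $(G,(X_w)_{w\in\nu},(Y_w)_{w\in\nu})$, the tuple $(R_{w_1},\dots,R_{w_m})$ is a fixed deterministic vector.

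Next I would invoke the transductive assumption, which says the hidden label is missing completely at random among $w_1,\dots,w_m$: the index $J$ with $w_{\text{test}}=w_J$ is uniform on $\{1,\dots,m\}$ and independent of all the data. This is all we use --- in particular, no exchangeability or distributional hypothesis on the graph, attributes, or labels is required, which is the content of ``valid with no further assumptions''. Consequently, conditional on the data, $R_{w_{\text{test}}}=R_{w_J}$ is a uniformly chosen coordinate of the fixed vector $(R_{w_1},\dots,R_{w_m})$, while the calibration scores are the complementary $m-1$ coordinates. Now the usual counting argument applies: $\hat q$ is the $\lfloor\alpha m\rfloor$-th largest of the $m-1$ non-test scores, and $Y_{w_{\text{test}}}$ is dropped from $\hat C$ exactly when $R_{w_{\text{test}}}\ge\hat q$, which --- at least when the $m$ scores are distinct --- happens for precisely the $\lfloor\alpha m\rfloor$ choices of $J$ whose score is among the $\lfloor\alpha m\rfloor$ largest. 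Averaging over $J$ and then over the data gives $\Prob(Y_{w_{\text{test}}}\notin\hat C)\le \lfloor\alpha m\rfloor/m\le\alpha$, i.e. $\Prob(Y_{w_{\text{test}}}\in\hat C)\ge 1-\alpha$.

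I expect the crux to be the first step --- establishing that the embedding, and therefore the calibration and test scores, are genuinely invariant to the train/test relabelling. This is where it matters that only the training and validation labels are fed to the GNN (never the calibration or test labels) and that a single shared representation is produced for all node/time pairs, the dilated unfolding being precisely what makes such a shared representation natural. A secondary, purely technical, point is the treatment of ties among the scores: the clean ``$\lfloor\alpha m\rfloor$ exclusions'' count is immediate when they are distinct, and in general one should either verify the inequality still runs the right way for the specific rule of Algorithm~\ref{alg:split_conformal} or add a no-atoms assumption on the score distribution (or a randomized tie-break).
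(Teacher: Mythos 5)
Your proof is correct and takes essentially the same route as the paper's: the paper establishes validity of \emph{full} conformal inference by the identical counting argument over a uniformly random query index $K$ (explicitly noting that neither exchangeability nor any symmetry of the scoring algorithm is needed), then obtains the split-conformal statement as a special case in which the scores depend only on the training/validation labels, whereas you argue directly on the split algorithm. Your remark on ties is a genuine (minor) refinement --- the paper's identification of the removal rule $R_{\text{test}} \geq \hat q$ with the event ``$R_K$ among the $\lfloor \alpha m\rfloor$ largest of $R_1,\ldots,R_m$'' implicitly assumes distinct scores, exactly as you note.
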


For the semi-inductive regime, we make the following assumptions:
\begin{assumption} \label{as:exchangeability}
  The columns $\+A_w, w \in \nu$, attributes $X_w, w \in \nu$, and labels $Y_w, w \in \nu$ are jointly exchangeable, that is, for any permutation $\pi: [m] \rightarrow [m]$,
\[\left\{\pi G, \pi (X_{w}, w \in \nu), \pi (Y_{w}, w \in \nu) \right\} \triangleq \left\{ G, (X_w, w \in \nu), (Y_w, w \in \nu) \right\}.\]
\end{assumption}
The symbol  $\triangleq$ means ``equal in distribution''. Informally, we can swap columns of $\*A = \left(\mathbf{A}^{(1)}, \dots, \mathbf{A}^{(T)}\right)$ corresponding to test or calibration pairs, along with the attributes and labels, without changing the likelihood of the data.

Given a permutation $\pi: [m] \rightarrow [m]$, we use $\pi G$ to denote a dynamic graph in which the columns of $G$ corresponding to $\nu_1, \ldots, \nu_{m}$ are permuted according to $\pi$. More precisely, $\pi G$ is the dynamic graph corresponding to the sequence of adjacency matrices $\+A^{(1)'}, \ldots, \+A^{(T)'}$ obtained as follows. First, copy $\+A^{(1)}, \ldots, \+A^{(T)}$ to $\+A^{(1)'}, \ldots, \+A^{(T)'}$. Next, overwrite any column $\nu_{\ell} \in \nu_1, \ldots, \nu_{m}$ to $\+A'_{\nu_\ell} = \+A_{\nu_{\pi^{-1}(\ell)}}$. If $x$ is a sequence of length at least $m$, we write $\pi x$ to denote the same sequence with its first $m$ elements permuted according to $\pi$, that is, $(\pi x)_\ell = x_{\pi^{-1}(\ell)}$ if $\ell \leq m$ and $(\pi x)_\ell = x_\ell$ otherwise.

\begin{assumption} \label{as:symmetry}
The GNN $\*G$ is permutation equivariant, that is, for any permutation $\pi : [N] \rightarrow [N]$ with associated permutation matrix $\Pi$, 
\[\*G(\Pi \+A \Pi^\top) = \Pi \*G(\+A).\]
\end{assumption}
 Informally, if we re-order the nodes, attributes and labels, we re-order the resulting embedding. 

\begin{thm}\label{thm:inductive}
  Under assumptions~\ref{as:exchangeability} and \ref{as:symmetry}, the prediction set output by Algorithm~\ref{alg:split_conformal} is valid in the semi-inductive regime, that is,
  \[\Prob(Y_{w_{\text{\rm test}}} \in \hat C) \geq 1-\alpha.\]
\end{thm}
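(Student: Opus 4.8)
The plan is to reduce the semi-inductive case to the standard exchangeability argument underpinning split conformal prediction, exactly as in Lemma~\ref{lem:validity}, by showing that Assumptions~\ref{as:exchangeability} and \ref{as:symmetry} together force the non-conformity scores $R_{w_{\text{test}}}$ and $\{R_w : w \in \nu^{(\text{calibration})}\}$ to be exchangeable. The subtlety the argument must address is that in the semi-inductive regime $w_{\text{test}}$ is a \emph{fixed} node/time pair, not one that is missing at random, so we cannot directly appeal to the transductive result; instead we must transfer the randomness from the data distribution (via Assumption~\ref{as:exchangeability}) to the labels of the calibration set.

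First I would fix an arbitrary permutation $\pi : [m] \rightarrow [m]$ and track how the whole pipeline transforms. The dilated unfolding $\+A^{\text{UNF}}$ built from $\pi G$ differs from the one built from $G$ by a relabelling of the temporal nodes corresponding to indices in $[m]$; this relabelling is realized by a permutation matrix $\Pi$ acting on the $(p+nT)$ node set of the unfolded graph (the global nodes and the temporal nodes outside $\nu_1,\dots,\nu_m$ are fixed). The attribute matrix $X^{\text{UNF}}$ and the supplied training/validation labels are unaffected, since those labels live on $\nu^{(\text{training})} \cup \nu^{(\text{validation})} = \{w_\ell : \ell > m\}$, which $\pi$ leaves alone. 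Hence by Assumption~\ref{as:symmetry} (permutation equivariance of $\*G$), the embedding $\hat{\+V}^{\text{UNF}}$ computed from $\pi G$ is $\Pi$ applied to the embedding computed from $G$; in particular, row $w_{\pi(\ell)}$ of the new embedding equals row $w_\ell$ of the old one for $\ell \le m$. So $\hat{\+V}^{\text{UNF}}_{w}$ is an equivariant function of the data.

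Next I would form the multiset of score–label pairs $S \coloneqq \{(R_{w_\ell}) : \ell \le m\}$ where $R_{w_\ell} = r(\hat{\+V}^{\text{UNF}}_{w_\ell}, Y_{w_\ell})$, noting $\hat q$ and the decision in Algorithm~\ref{alg:split_conformal} depend on the data only through this multiset together with the identity of $w_{\text{test}}$. Combining the previous paragraph with Assumption~\ref{as:exchangeability}, the joint distribution of $(R_{w_1},\dots,R_{w_m})$ — where for $w_{\text{test}}$ we plug in the true (unobserved) label $Y_{w_{\text{test}}}$ — is invariant under any $\pi$, i.e. these $m$ scores are exchangeable. The standard split-conformal quantile argument then applies verbatim: among $m$ exchangeable scores, the probability that the one designated "test" exceeds the $\lfloor \alpha m \rfloor$-th largest of the remaining $m-1$ is at most $\alpha$ (a rank/ties argument, as in \citep{barber2023conformal}), which is exactly the event that $Y_{w_{\text{test}}}$ is removed from $\hat C$. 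Hence $\Prob(Y_{w_{\text{test}}} \in \hat C) \ge 1-\alpha$.

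The main obstacle I anticipate is the bookkeeping in the second step: verifying rigorously that permuting the columns of $G$ indexed by $\nu_1,\dots,\nu_m$ corresponds, at the level of the dilated unfolding, to conjugation $\+A^{\text{UNF}} \mapsto \Pi \+A^{\text{UNF}} \Pi^\top$ by a genuine permutation matrix $\Pi$ on the enlarged node set, and that the attributes and the training/validation labels fed to $\*G$ are literally unchanged under this relabelling (so that Assumption~\ref{as:symmetry} can be invoked with no leftover terms). Once that correspondence is nailed down, the reduction to exchangeable scores — and thence to the classical conformal coverage bound — is routine and mirrors the proof of Lemma~\ref{lem:validity}.
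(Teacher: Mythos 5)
Your proposal is correct and follows essentially the same route as the paper: the paper's appendix proof likewise combines data exchangeability (Assumption~\ref{as:exchangeability}) with permutation equivariance of the pipeline (packaged there as a symmetry assumption on the abstract score algorithm, which the unfolded-GNN construction satisfies via Assumption~\ref{as:symmetry}) to conclude that the scores $R_1,\ldots,R_m$ are exchangeable, and then applies the same rank argument to the fixed test index. The bookkeeping you flag --- that permuting columns of $\mathcal{A}$ indexed by $\nu_1,\ldots,\nu_m$ amounts to conjugating $\+A^{\text{UNF}}$ by a permutation matrix fixing the global nodes --- is left at the same level of detail in the paper itself.
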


{\bf Discussion.} Assumption \ref{as:symmetry} is relatively mild and roughly satisfied by standard GNN architectures. Assumption~\ref{as:exchangeability} is more challenging, and can guide us towards curating a calibration set. For example, if we hypothesise that the $(\+A^{(t)}_i, t \in [T])$ are i.i.d. stationary stochastic processes (over time), then we can satisfy Assumption~\ref{as:exchangeability} by ensuring $\nu^{(\text{calibration})} \cup \nu^{(\text{test})}$ contains only distinct nodes. The independence hypothesis will typically break in undirected graphs because of the necessary symmetry of each $\+A^{(t)}$, and in this case we may also wish to ensure that $\nu^{(\text{calibration})} \cup \nu^{(\text{test})}$ contains only distinct time points. In general, for predicting the label of nodes based on historical information, Assumption~\ref{as:exchangeability} should make us wary of global drift, a poor distributional overlap between the test and calibration nodes, and autocorrelation. The semi-inductive regime is where our theory actively requires a dilated unfolding, and where alternative approaches can fail completely; however, even in the transductive regime where our theory covers alternative approaches, the prediction sets from unfolded GNNs (UGNNs) are often smaller.

\subsection{A Visual Motivation: i.i.d. Draws of a Stochastic Block Model}\label{subsec:dsbm_example}
Consider a simple two-community dynamic stochastic block model (DSBM) \citep{yang2011detecting, xu2014dynamic} for undirected graphs. Let 
\begin{equation}
    \mathbf{B}^{(1)} = \mathbf{B}^{(2)} = \begin{bmatrix}
        0.5 & 0.5 \\ 
        0.5 & 0.9
    \end{bmatrix}, 
\end{equation}
be matrices of inter-community edge probabilities, and let $z \in \{1,2\}^n$ be a community allocation vector. We then draw each symmetric adjacency matrix point as $\mathbf{A}_{ij}^{(1)} \overset{\text{ind}}{\sim} \text{Bernoulli} (\mathbf{B}_{z_i, z_j}^{(1)})$ and $\mathbf{A}_{ij}^{(2)} \overset{\text{ind}}{\sim} \text{Bernoulli} (\mathbf{B}_{z_i, z_j}^{(2)}$), for $i \leq j$. 

A typical way of applying a GNN to a series of graphs is to first stack the adjacency matrices as blocks on the diagonal \citep{torch_geometric}, 
\begin{equation*}
     \mathbf{A}^{\text{BD}} = \begin{bmatrix} \mathbf{A}^{(1)} & & \mathbf{0} \\ & \ddots & \\ \mathbf{0} & & \mathbf{A}^{(T)} \end{bmatrix},
\end{equation*}
from which a dynamic graph embedding is obtained through
\[\hat{\mathbf{V}}^{\text{BD}}= \mathcal{G} \left( \mathbf{A}^{\text{BD}}, X^\text{BD}, \{Y_w : w \in \nu^{(\text{training})} \cup \nu^{(\text{validation})} \} \right),\]
where $X^\text{BD} \in \R^{nT \times c}$ is the matrix of $c$-dimensional attributes for all node/time pairs. (For networks without attributes, $X^\text{BD}$ is set to be the $nT \times nT$ identity matrix). From this representation, it is clear that $\mathbf{A}^{\text{BD}}$ is not exchangeable over time.

Figure \ref{fig:contribution_overview} plots the representations, $\hat{\mathbf{V}}_\text{BD}$ and $\hat{\mathbf{V}}_\text{UNF}$ of $\mathbf{A}^{(1)}, \mathbf{A}^{(2)}$ using a GCN \citep{kipf2016semi} as $\mathcal{G}$. It is clear that, despite the fact that $\mathbf{A}^{(1)}$ and $\mathbf{A}^{(2)}$ come from the same distribution, the output of the block diagonal GCN appears to encode a change that is not present. We refer to this issue as \textit{temporal shift}. In contrast, the output of UGCN is exchangeable over time and so features no temporal shift.

This example highlights two advantages of unfolded GNN over the standard GNN case. First, as there is no temporal shift, we would expect it to show improved predictive power. Second, exchangeability over time allows for the application of conformal inference across time.




\section{Experiments} \label{sec:experiments}
We evaluate the performance of UGNNs using four examples, comprising simulated and real data, summarised in Table \ref{tab:datasets}. We will then delve deeper into a particular dataset to show how variation in prediction sets can tell us something about the underlying network dynamics. Figure \ref{fig:drift_in_datasets} displays the number of edges in each of the considered datasets over time. The SBM, school and flight data each feature abrupt changes in structure, while the trade data is relatively smooth.

\begin{table}[ht]
\centering
\begin{tabular}{|c|c|c|c|c|c|c|c|}
\hline
\textbf{Dataset} & \textbf{Nodes} & \textbf{Edges} & \textbf{Time points} & \textbf{Classes} & \textbf{Weighted} & \textbf{Directed} & \textbf{Drift} \\
\hline 
SBM & 300 & 38,350 & 8 & 3 & No & No & No \\ 
\hline 
School & 232 & 53,172 & 18 & 10 & No & No & No \\
\hline 
Flight & 2,646 & 1,635,070 & 36 & 46 & Yes & No & No \\
\hline 
Trade & 255 & 468,245 & 32 & 163 & Yes & Yes & Yes \\
\hline 
\end{tabular}
\caption{A summary of considered datasets.}
\label{tab:datasets}
\end{table}

\begin{figure}[htb]
    \centering
    \includegraphics[width=.9\linewidth]{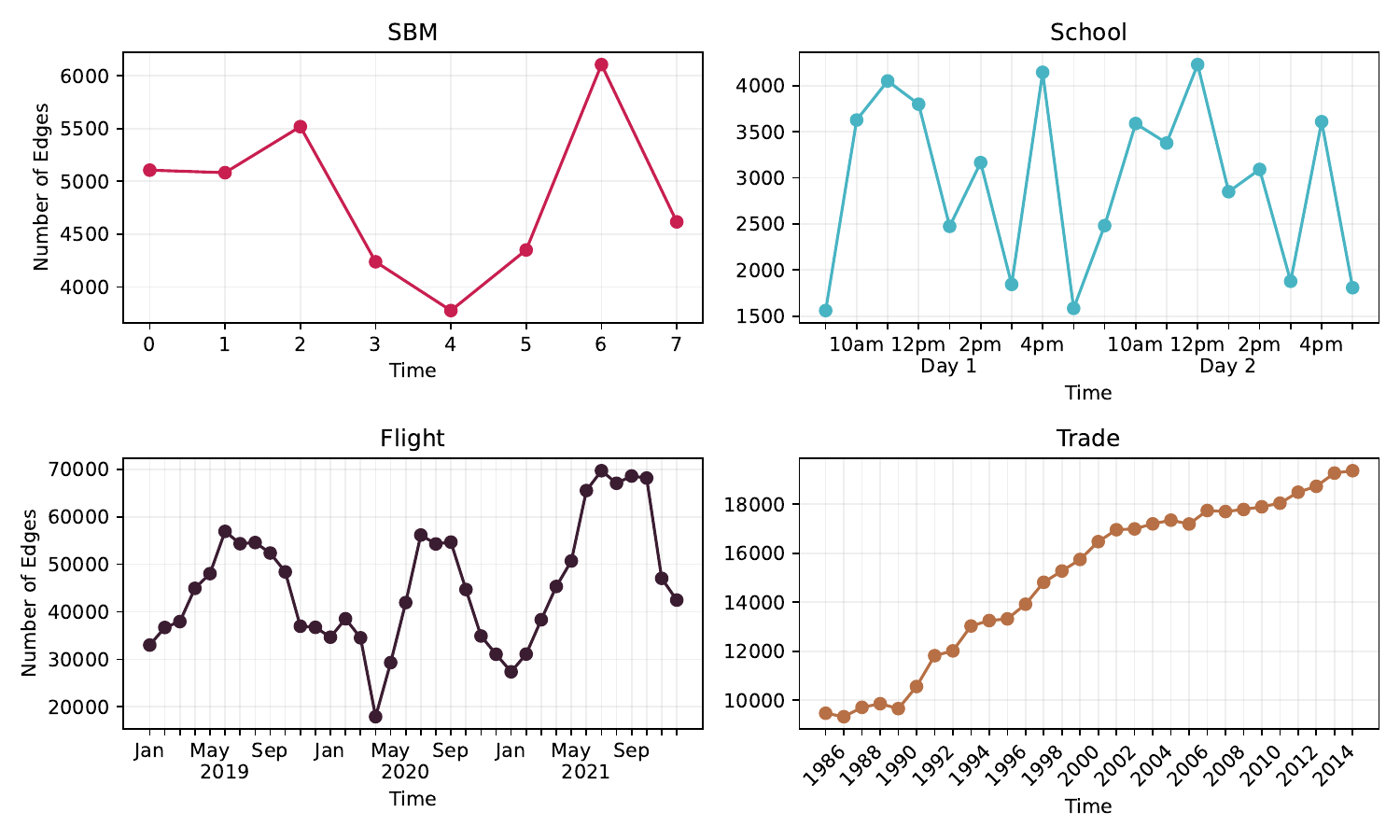}
    \caption{Numbers of edges over time. The School and Flight data show rough periodic/seasonal structure, which the Trade data features drift with the number of edges growing smoothly with time.}
    \label{fig:drift_in_datasets}
\end{figure}

\textbf{SBM}. A three-community DSBM with inter-community edge probability matrix
\begin{equation*}
    \mathbf{B}^{(t)} = \begin{bmatrix}
        s_1 & 0.02 & 0.02 \\
        0.02 & s_2 & 0.02 \\
        0.02 & 0.02 & s_3
    \end{bmatrix},
\end{equation*}
where $s_1$, $s_2$, and $s_3$ represent within-community connection states. Each $s$ can be one of two values: 0.08 or 0.16. We simulate a dynamic network over $T=8$ time points, corresponding to the $8=2^3$ possible combinations, drawing each $\mathbf{A}^{(t)}$ from each unique $\mathbf{B}^{(t)}$ as detailed in Section~\ref{subsec:dsbm_example}. The ordering of these time points is random, leading to dynamic total edge count (Figure \ref{fig:drift_in_datasets}). The task is to predict the community label of each node.

\textbf{School}. A dynamic social network between pupils at a primary school in Lyon, France \citep{school}. Each of the 232 pupils wore a radio identification device such that each interaction, with its timestamp, could be recorded, forming a dynamic network. An interaction was defined by close proximity for 20 seconds. The task here is to predict the classroom allocation of each pupil. This dataset has temporal structure particularly distinguishing class time, where pupils cluster together based on their class (easier), and lunchtime, where the cluster structure breaks down (harder). The data covers two full school days, making it roughly repeating. 

\textbf{Flight}. The OpenSky dataset tracks the number of flights (edges) between airports (nodes) over each month from the start of 2019 to the end of 2021 \citep{olive2022crowdsourced}. The task is to predict the country of a given (European only) airport. The network exhibits  seasonal and periodic patterns, and features a change in structure when the COVID-19 pandemic hit Europe around March 2020.

\textbf{Trade}. An agricultural trade network between members of the United Nations tracked yearly between 1986 and 2016 \citep{macdonald2015rethinking}, which features in the Temporal Graph Benchmark \citep{huang2024temporal}. The network is directed and edges are weighted by total trade value. Unlike the other examples, this network exhibits important drift over time. This behaviour is visualised in Figure \ref{fig:drift_in_datasets}. We consider the goal of predicting the top trade partner for each nation for the next year, a slight deviation from the benchmark where performance is quantified by the Normalized Discounted Cumulative Gain of the top 10 ranked trade partners (less naturally converted into a conformal prediction problem).


\subsection{Experimental Setup}

On each dataset, we apply GCN \citep{kipf2016semi} and GAT \citep{velivckovic2017graph} to the block diagonal and unfolded matrix structures (referred to as UGCN and UGAT respectively) in both the transductive and the semi-inductive settings. The datasets considered do not have attributes. In the transductive regime, we randomly assign nodes to train, validation, calibration and test sets with ratios 20/10/35/35, regardless of their time point label. Due to the high computational cost of fitting multiple GNNs to quantify randomisation error, we follow \citep{huang2024uncertainty} by fitting the GNN 10 times, then constructing 100 different permutations of calibration and test sets to allow 1,000 conformal inference instances per model and dataset, and apply Algorithm~\ref{alg:full_conformal}. 

For the semi-inductive regime, we apply a similar approach, except that we reserve the \emph{last} 35\% of the observation period as the test set, 
rounded to use an integer number of time points for testing to ensure every node/time pair in the test set is unlabelled. The training, validation and calibration sets are then picked at random, regardless of time point label, from the remaining data with ratios 20/10/35. As the test set is fixed, efficiency saving is not possible and we instead run  Algorithm~\ref{alg:split_conformal} (split conformal) on 50 random data splits. 

Prediction sets are computed using the Adaptive Prediction Sets (APS) algorithm \citep{romano2020classification} in both regimes. 
For each experiment, we return the mean accuracy, coverage and prediction set size across all conformal runs to evaluate the predictive power of each GNN, as well as its conformal performance. To quantify error, we quote the standard deviation of each metric. 
Code to reproduce all experiments can be found in Appendix \ref{app:code}.

\subsection{Results}


\textbf{UGNN has higher accuracy in every semi-inductive example and most transductive examples.} In the transductive regime such advantages are often minor, however in the semi-inductive regime UGNN displays a more significant advantage (Table \ref{tab:accs}). Block GNN is close to random guessing in the semi-inductive SBM and School examples (having 3 and 10 classes respectively). In contrast, UGNN maintains a strong accuracy in both regimes. 

\begin{table}[htb]
\centering

\begin{subtable}{\textwidth}
\centering
\begin{tabular}{|c|c|c|c|c|c|c|c|c|}
\hline
Methods & \multicolumn{2}{c|}{SBM} & \multicolumn{2}{c|}{School}  \\
\cline{2-5}
& Trans. & Semi-ind. & Trans. & Semi-ind. \\
\hline
Block GCN & 0.964 $\pm$ 0.011 & 0.334 $\pm$ 0.024 & 0.856 $\pm$ 0.011 & 0.116 $\pm$ 0.011  \\
UGCN & \textbf{0.980 $\pm$ 0.004} & \textbf{0.985 $\pm$ 0.003} & \textbf{0.924 $\pm$ 0.009} & \textbf{0.915 $\pm$ 0.013}  \\
\hline
Block GAT & 0.916 $\pm$ 0.028 & 0.346 $\pm$ 0.024 & 0.807 $\pm$ 0.016 & 0.107 $\pm$ 0.024 \\
UGAT & \textbf{0.947 $\pm$ 0.032} & \textbf{0.969 $\pm$ 0.017} & \textbf{0.896 $\pm$ 0.016} & \textbf{0.868 $\pm$ 0.017} \\
\hline
\end{tabular}
\end{subtable}

\vspace*{0.1 cm}

\begin{subtable}{\textwidth}
\centering
\begin{tabular}{|c|c|c|c|c|c|c|c|c|}
\hline
Methods & \multicolumn{2}{c|}{Flight} & \multicolumn{2}{c|}{Trade}  \\
\cline{2-5}
& Trans. & Semi-ind. & Trans. & Semi-ind. \\
\hline
Block GCN & 0.405 $\pm$ 0.075 & 0.121 $\pm$ 0.059 & \textbf{0.095 $\pm$ 0.017} & 0.049 $\pm$ 0.018  \\
UGCN & \textbf{0.441 $\pm$ 0.069} & \textbf{0.477 $\pm$ 0.061} & 0.082 $\pm$ 0.031 & \textbf{0.050 $\pm$ 0.017} \\
\hline
Block GAT & \textbf{0.417 $\pm$ 0.031} & 0.114 $\pm$ 0.061 & \textbf{0.125 $\pm$ 0.017} & 0.046 $\pm$ 0.015 \\
UGAT & 0.408 $\pm$ 0.060 & \textbf{0.427 $\pm$ 0.061} & 0.111 $\pm$ 0.010 & \textbf{0.048 $\pm$ 0.015} \\
\hline
\end{tabular}
\end{subtable}
\caption{Accuracy (higher is better) for 2 GNNs (GCN or GAT) under 2 representations (block diagonal adjacency or unfolding) for 4 datasets. Bold values indicate the highest accuracy for a given GNN/representation pair.\vspace{-0.4cm}}
\label{tab:accs}
\end{table}

\textbf{Conformal prediction on UGNN and block GNN is empirically valid for every transductive example.} This confirms (Lemma~\ref{lem:validity}) that exchangeability/label equivariance are not necessary for valid conformal prediction in the transductive regime, as even block GNN (which has no exchangeability properties) is valid here (Table \ref{tab:marginal_covg}). However, UGNN achieves valid coverage with smaller prediction sets in most cases, making it the preferable method (Table \ref{tab:set_sizes}).

\begin{table}[htb]
\centering

\begin{subtable}{\textwidth}
\centering
\begin{tabular}{|c|c|c|c|c|}
\hline
Methods & \multicolumn{2}{c|}{SBM} & \multicolumn{2}{c|}{School}  \\
\cline{2-5}
& Trans. & Semi-ind. & Trans. & Semi-ind. \\
\hline
Block GCN & \textbf{0.901 $\pm$ 0.014} & 0.659 $\pm$ 0.045 & \textbf{0.901 $\pm$ 0.012} & 0.812 $\pm$ 0.033 \\
UGCN & \textbf{0.901 $\pm$ 0.015} & \textbf{0.918 $\pm$ 0.025} & \textbf{0.901 $\pm$ 0.012} & \textbf{0.924 $\pm$ 0.013} \\
\hline
Block GAT & \textbf{0.901 $\pm$ 0.015} & 0.450 $\pm$ 0.154 & \textbf{0.901 $\pm$ 0.012} & 0.662 $\pm$ 0.084 \\
UGAT & \textbf{0.901 $\pm$ 0.014} & \textbf{0.914 $\pm$ 0.022} & \textbf{0.901 $\pm$ 0.012} & \textbf{0.909 $\pm$ 0.021} \\
\hline
\end{tabular}
\end{subtable}

\vspace*{0.1 cm}

\begin{subtable}{\textwidth}
\centering
\begin{tabular}{|c|c|c|c|c|}
\hline
Methods & \multicolumn{2}{c|}{Flight} & \multicolumn{2}{c|}{Trade} \\
\cline{2-5}
& Trans. & Semi-ind. & Trans. & Semi-ind. \\
\hline
Block GCN & \textbf{0.900 $\pm$ 0.002} & 0.853 $\pm$ 0.013 & \textbf{0.900 $\pm$ 0.009} & 0.842 $\pm$ 0.015 \\
UGCN & \textbf{0.900 $\pm$ 0.002} & \textbf{0.910 $\pm$ 0.003} & \textbf{0.900 $\pm$ 0.009} & 0.847 $\pm$ 0.017 \\
\hline
Block GAT & \textbf{0.900 $\pm$ 0.002} & 0.862 $\pm$ 0.013 & \textbf{0.900 $\pm$ 0.009} & 0.840 $\pm$ 0.021 \\
UGAT & \textbf{0.900 $\pm$ 0.002} & \textbf{0.906 $\pm$ 0.002}& \textbf{0.901 $\pm$ 0.009} & 0.854 $\pm$ 0.023 \\
\hline
\end{tabular}
\end{subtable}

\caption{Coverage (targeted to $\geq 0.9$) for 2 GNNs (GCN or GAT) under 2 representations (block diagonal adjacency or unfolding) for 4 datasets. Bold values indicate valid coverage with target $\geq 0.9$. \vspace{-0.8cm}}
\label{tab:marginal_covg}
\vspace{0.5cm}
\end{table}

\textbf{Conformal prediction on UGNN is empirically valid for every semi-inductive example without drift, while block GNN is valid for none.} In all examples, except the trade dataset, UGNN produces valid conformal (Table \ref{tab:marginal_covg}) with similar prediction set sizes to the transductive case (Table \ref{tab:set_sizes}). 

We include the trade data as an example of where UGNN fails to achieve valid coverage in the semi-inductive regime. This failure is due to the drift present in the trade network (Figure \ref{fig:drift_in_datasets}), which grows over time (approximately doubling in edges over the whole period). Therefore, the network at the start of this series is not approximately exchangeable with the network at the end of the series. The problem of drift is difficult to handle for UGNN alone, but we hypothesise that downstream methods could be applied to improve coverage in these cases, for example \citep{barber2023conformal} and \citep{clarkson2023distribution}. We leave this investigation for future work.




Due to the unfolded matrix having double the entries of the block diagonal matrix, the computation times for UGNN were roughly double that of block GNN. The maximum time to train an individual model was around a minute on an AMD Ryzen 5 3600 CPU processor. A full run of experiments on the largest dataset took around 4.5 hours. 

In the appendix, we include many further experiments which include a demonstration of validity in the temporal transductive regime (Appendix \ref{app:temporal_transductive}), consideration of other score functions for obtaining prediction sets (Appendix \ref{app:score_functions}), time-conditional coverage (Appendix \ref{app:time_coverage}), various other train/valid/calibration/test splits (Appendix \ref{app:data_splits}), and use of other GNN architectures (Appendix \ref{app:other_gnns}). In each case, these further experiments bolster the claims made here; UGNN is superior in both predictive performance and in conformal metrics.

\begin{table}[ht]
\centering
\begin{subtable}{\textwidth}
\centering
\begin{tabular}{|c|c|c|c|c|}
\hline
Methods & \multicolumn{2}{c|}{SBM} & \multicolumn{2}{c|}{School}  \\
\cline{2-5}
& Trans. & Semi-ind. & Trans. & Semi-ind. \\
\hline
Block GCN & \textbf{1.258 $\pm$ 0.053} & \textit{1.977 $\pm$ 0.138} & 4.542 $\pm$ 0.167 & \textit{8.079 $\pm$ 0.188} \\
UGCN & 1.263 $\pm$ 0.206 & \textbf{1.097 $\pm$ 0.171} & \textbf{2.763 $\pm$ 0.311} & \textbf{3.037 $\pm$ 0.251} \\
\hline
Block GAT & 1.063 $\pm$ 0.180 & \textit{1.320 $\pm$ 0.466} & 3.863 $\pm$ 0.813 & \textit{6.540 $\pm$ 0.830} \\
UGAT & \textbf{1.053 $\pm$ 0.249} & \textbf{1.042 $\pm$ 0.201} & \textbf{3.552 $\pm$ 0.756} & \textbf{4.185 $\pm$ 1.228} \\
\hline
\end{tabular}
\end{subtable}

\vspace*{0.1 cm}

\begin{subtable}{\textwidth}
\centering
\begin{tabular}{|c|c|c|c|c|}
\hline
Methods & \multicolumn{2}{c|}{Flight} & \multicolumn{2}{c|}{Trade} \\
\cline{2-5}
& Trans. & Semi-ind.& Trans. & Semi-ind. \\
\hline
Block GCN & 24.116 $\pm$ 2.711 & \textit{23.283 $\pm$ 2.285} & \textbf{82.556 $\pm$ 4.966} & \textit{\textbf{80.652 $\pm$ 6.709}} \\
UGCN & \textbf{22.369 $\pm$ 1.953} & \textbf{23.030 $\pm$ 2.115} & 86.319 $\pm$ 7.520 & \textit{85.006 $\pm$ 9.516} \\
\hline
Block GAT & 25.173 $\pm$ 1.631 & \textit{24.956 $\pm$ 1.977} & \textbf{87.603 $\pm$ 6.945} & \textit{\textbf{84.708 $\pm$ 9.690}} \\
UGAT & \textbf{24.453 $\pm$ 2.368} & \textbf{24.451 $\pm$ 2.307} & 92.200 $\pm$ 7.364 & \textit{90.021 $\pm$ 11.592} \\
\hline
\end{tabular}
\end{subtable}

\caption{Set sizes (lower is better) for 2 GNNs (GCN or GAT) under 2 representations (block adjacency or unfolding) for 4 datasets. Values in bold indicate a smaller set size for a given GNN. Values in italics indicate invalid sets.\vspace{-0.5cm}}
\label{tab:set_sizes}

\end{table}

\subsection{Temporal Analysis}

The goal of conformal prediction is to provide a notion of uncertainty to an otherwise black-box model. As a problem becomes more difficult, this should be reflected in a larger prediction set size to maintain target coverage. We analyse this behaviour by focusing on the school data example.

Figures \ref{gat_figs:gat_school_acc_trans} and \ref{gat_figs:gat_school_acc_ind} compare the accuracy of both UGAT and block GAT for each time window of the school dataset. During lunchtime, pupils are no longer clustered in their classrooms, making prediction of their class more difficult. Figure \ref{gat_figs:gat_school_acc_trans} confirms a significant drop in accuracy for both models at lunchtime in the transductive case. UGAT also displays this behaviour in the semi-inductive case (Figure \ref{gat_figs:gat_school_acc_ind}), while block GAT's performance is no better than random selection. 


At these more difficult lunchtime windows, the prediction set sizes increase for both methods in the transductive case as shown in Figure \ref{gat_figs:gat_school_sets_trans}. UGAT also displays this increase in the semi-inductive regime (Figure \ref{gat_figs:gat_school_sets_ind}), while block GAT does not adapt. This demonstrates that conformal prediction can quantify classification difficulty over time when using a UGNN. Note that while computing accuracy requires the knowledge of test labels to quantify difficulty, conformal prediction of set size does not. Figure \ref{gat_figs:gat_school_coverage_trans} confirms that both methods maintain coverage in the transductive case (increasing uncertainty at lunchtimes), and UGAT maintains coverage in the semi-inductive case (Figure \ref{gat_figs:gat_school_coverage_ind}). The story is similar for UGCN (Appendix \ref{app:gcn_school_plots}).

\begin{figure}[ht]
    \centering
    \begin{subfigure}[b]{0.3\textwidth}
        \includegraphics[width=\textwidth]{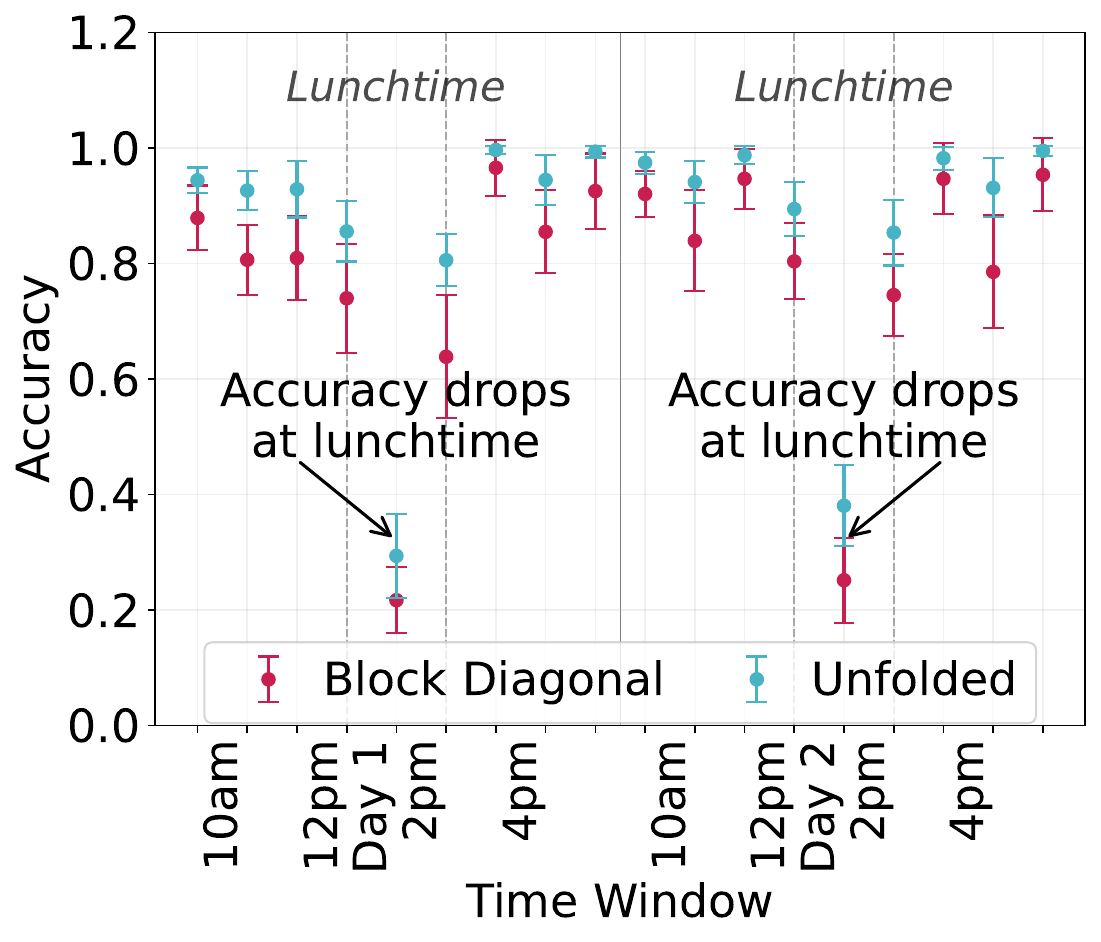}
        \caption{Accuracy - Transductive.}
        \label{gat_figs:gat_school_acc_trans}
    \end{subfigure}
    \hfill
    \begin{subfigure}[b]{0.3\textwidth}
        \includegraphics[width=\textwidth]{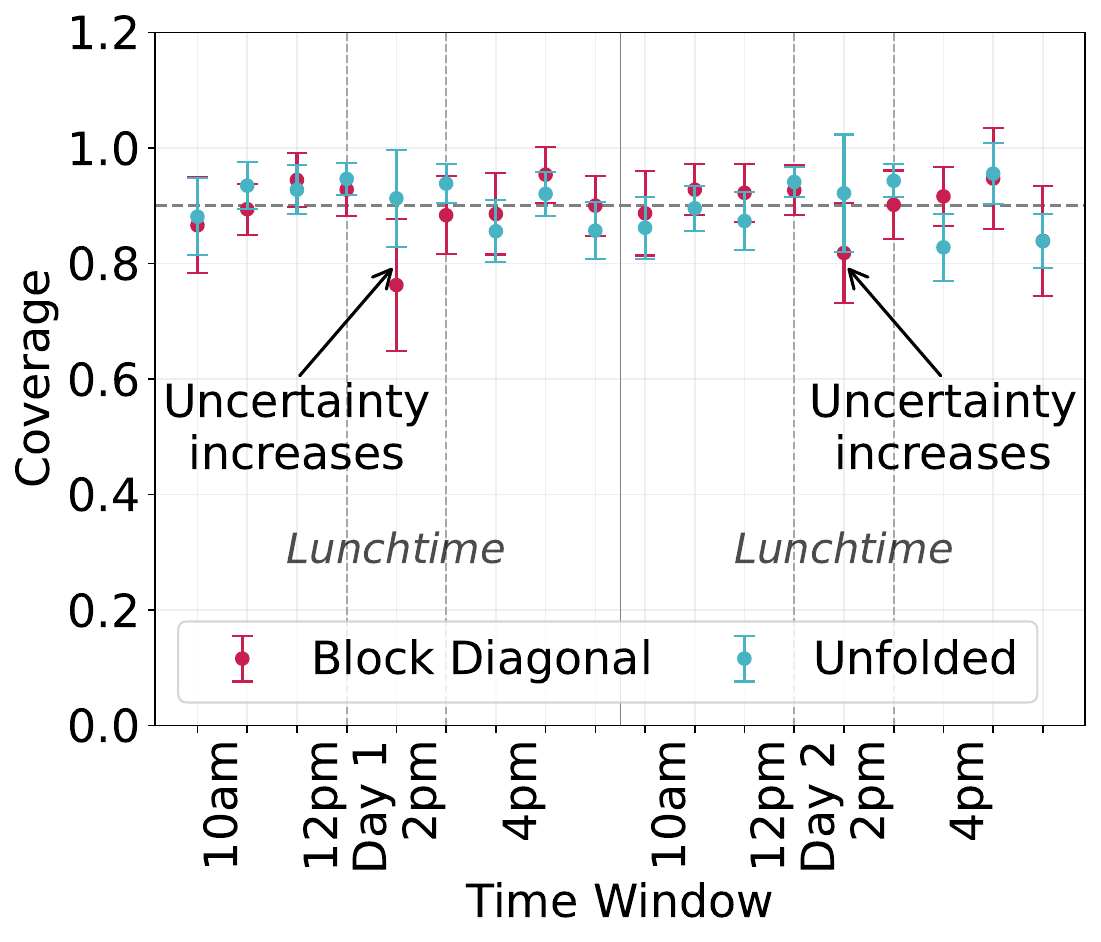}
        \caption{Coverage - Transductive.}
        \label{gat_figs:gat_school_coverage_trans}
    \end{subfigure}
    \hfill
    \begin{subfigure}[b]{0.3\textwidth}
        \includegraphics[width=\textwidth]{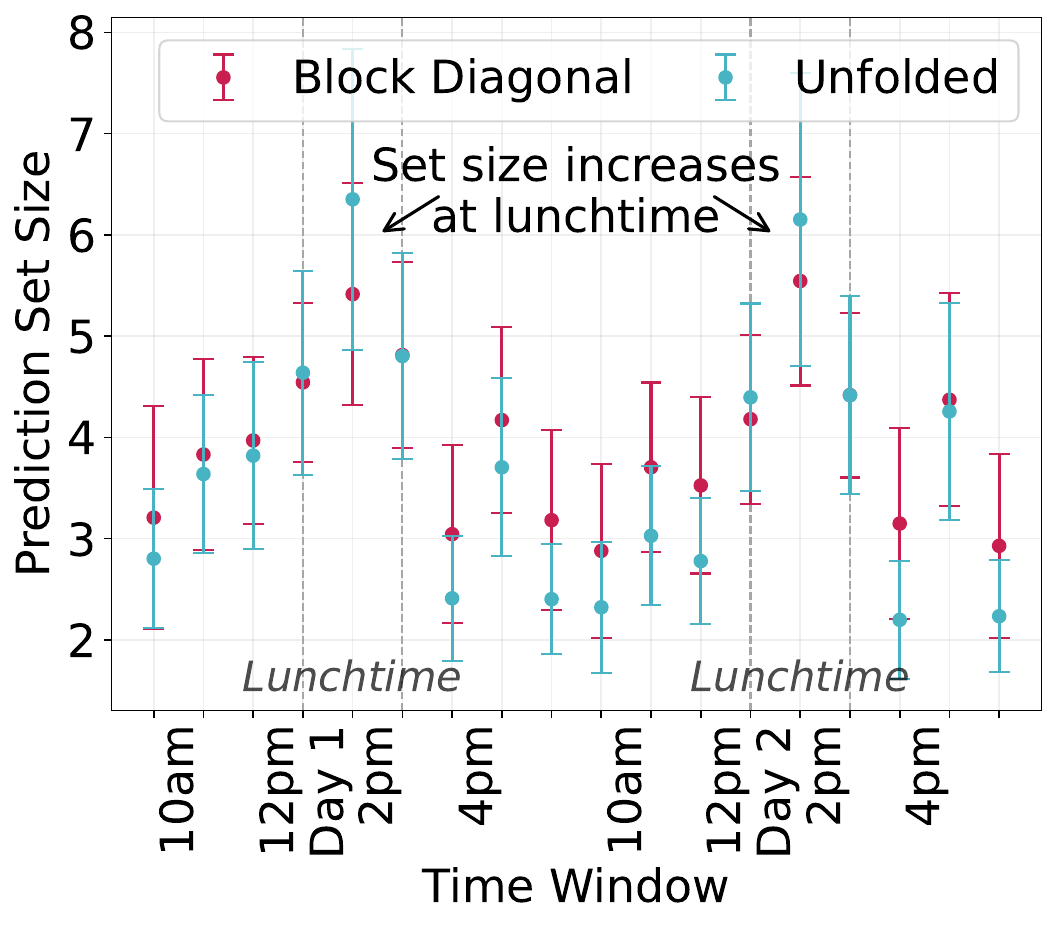}
        \caption{Set Size - Transductive.}
        \label{gat_figs:gat_school_sets_trans}
    \end{subfigure}
    \hfill
    \begin{subfigure}[b]{0.3\textwidth}
        \includegraphics[width=\textwidth]{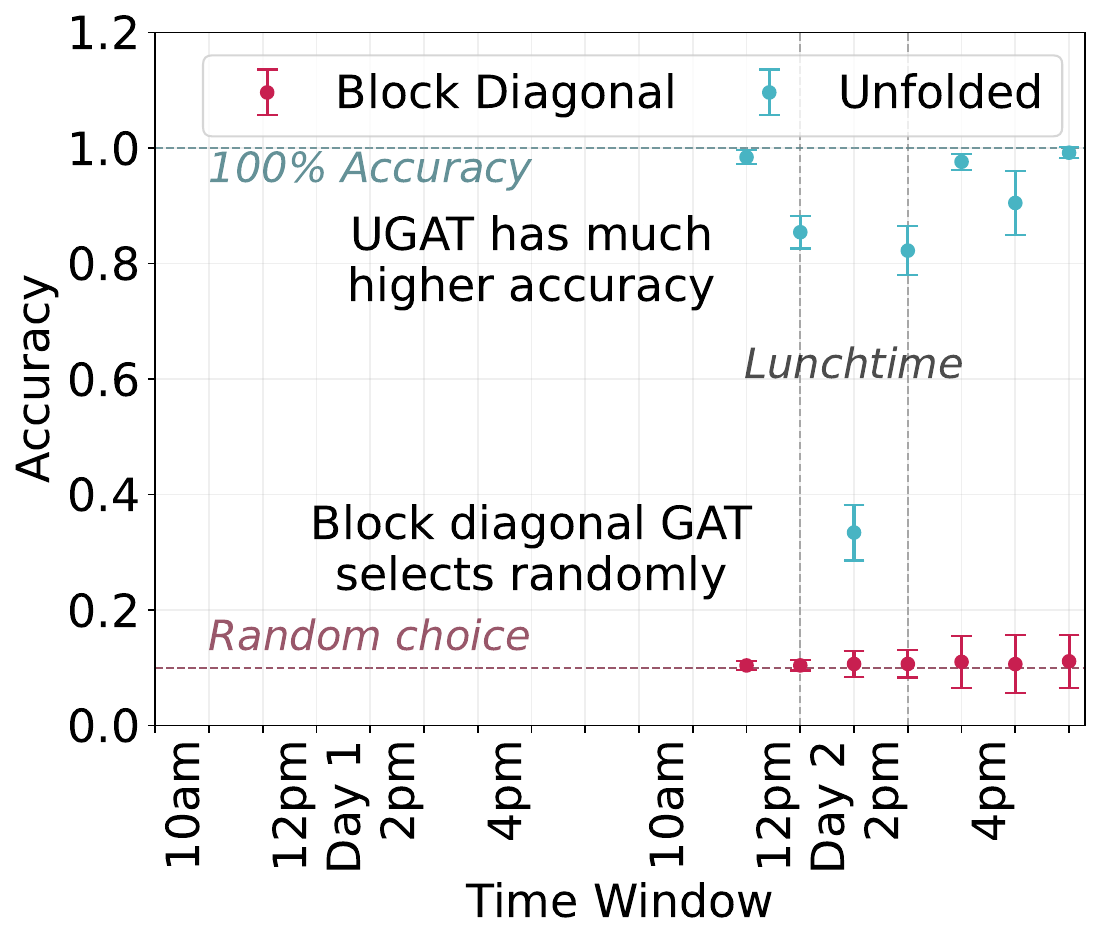}
        \caption{Accuracy - Semi-inductive.}
        \label{gat_figs:gat_school_acc_ind}
    \end{subfigure}
    \hfill
    \begin{subfigure}[b]{0.3\textwidth}
        \includegraphics[width=\textwidth]{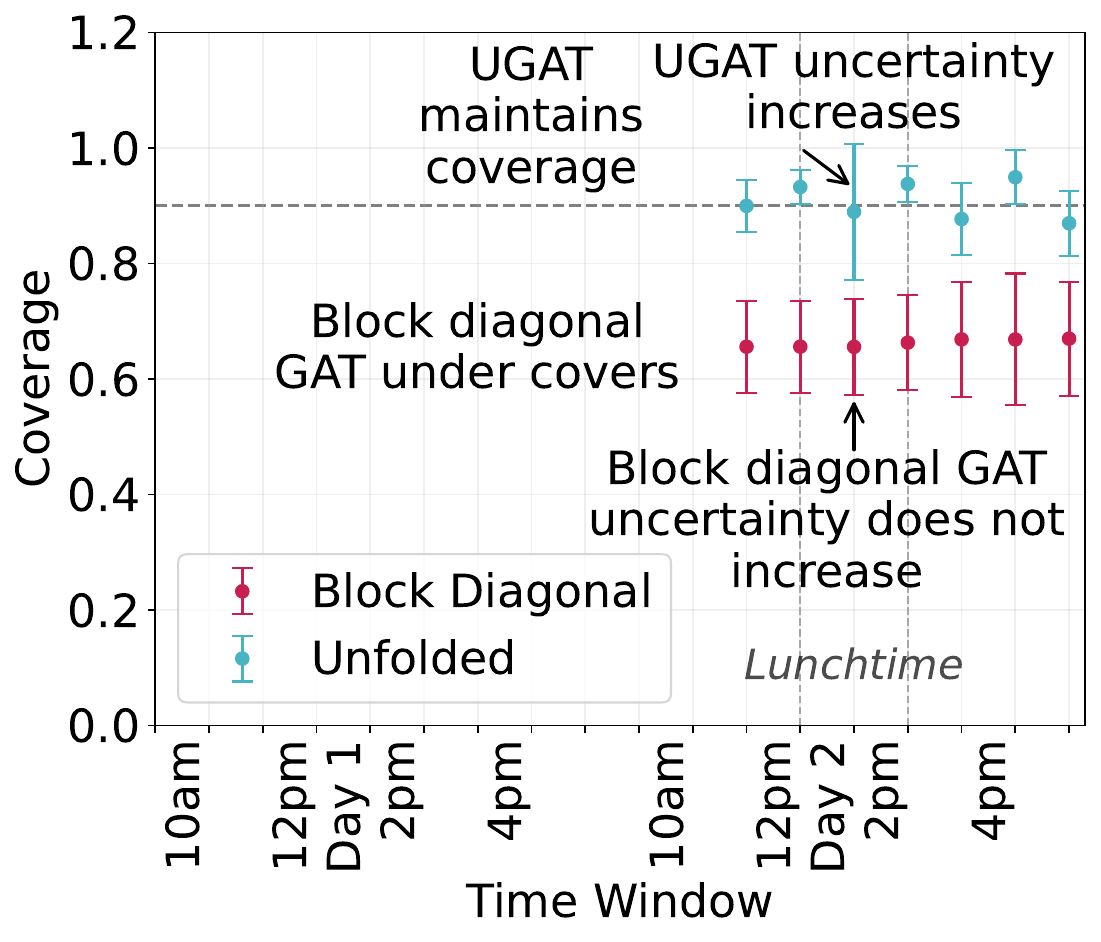}
        \caption{Coverage - Semi-inductive.}
        \label{gat_figs:gat_school_coverage_ind}
    \end{subfigure}
    \hfill
    \begin{subfigure}[b]{0.3\textwidth}
        \includegraphics[width=\textwidth]{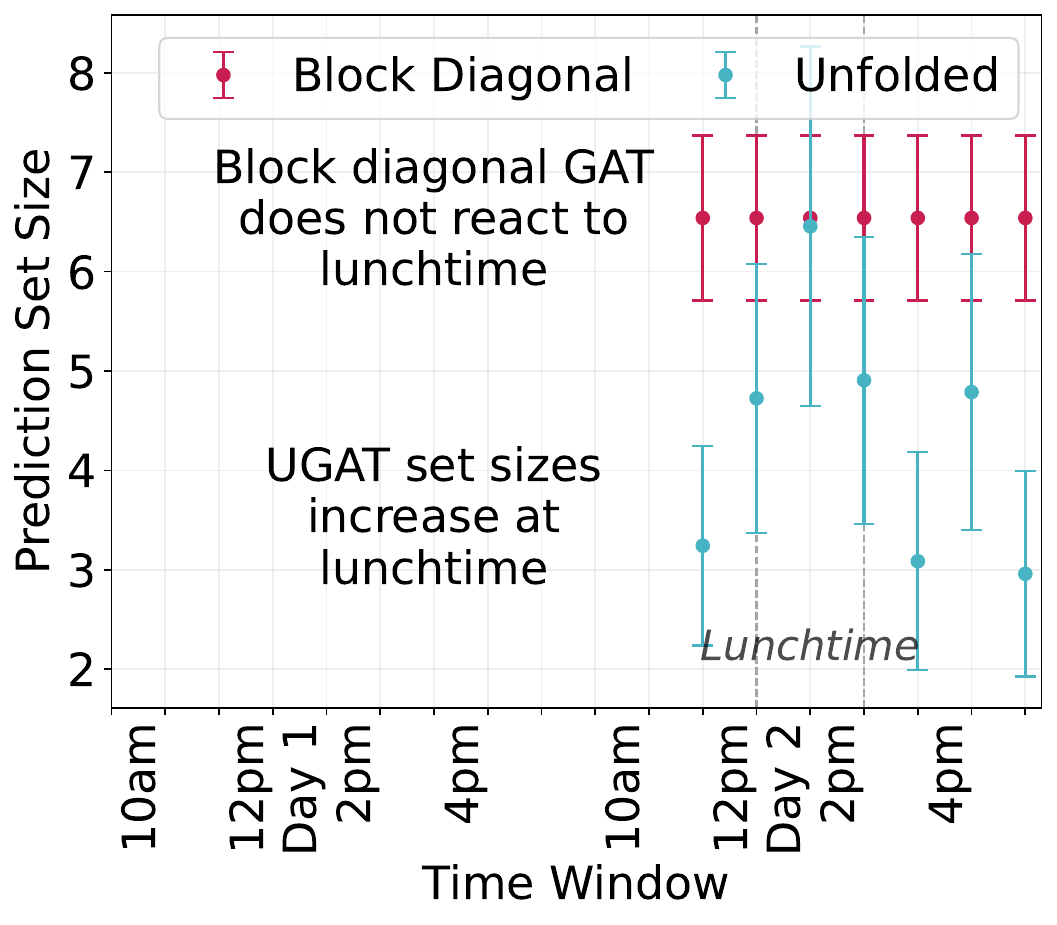}
        \caption{Set Size - Semi-inductive.}
        \label{gat_figs:gat_school_sets_ind}
    \end{subfigure}
    \caption{Performance metrics for each time window of the school dataset for unfolded GAT and block diagonal GAT. The prediction task gets more difficult at lunchtime, as shown by the drop in accuracy of both methods in the transductive case. UGAT has marginally better performance in the transductive case and significantly better performance in the semi-inductive case. Prediction set sizes increase at lunchtime, with only UGAT set sizes reacting in the semi-inductive case. Both methods maintain target coverage in the transductive case, with uncertainty increasing at the more difficult lunchtime window. UGAT also maintains target coverage in the semi-inductive case, while block GAT under-covers.}
    \label{gat_figs:school_combined}
    \vspace*{-0.5 cm}
\end{figure}

\section{Discussion} \label{sec:discussion}
This paper proposes unfolded GNNs, which exhibit exchangeability properties allowing conformal inference in both transductive and semi-inductive regimes. We demonstrate improved predictive performance, coverage, and prediction set size across a range of simulated and real data examples.

As the supplied code will demonstrate, we have intentionally avoided any type of fine-tuning, because the goal here is less prediction accuracy than uncertainty quantification. There are significant opportunities to improve the prediction performance of unfolded GNNs, such as employing architectures better suited to bipartite graphs.

The raw computational challenge of unfolding is modest (a factor of 2) but typical tricks for mega-scale analyses, for example loading time-points into memory independently, cannot be applied. However, the embedding of the behaviour of a node over time (`rows' of $\mathcal{A}$) is in principle amenable to acceleration. Specifically, the behaviour of a node at only a few representative times may be needed to anchor behaviour at a target time. Therefore acceleration of unfolded GNNs to mega-scale is plausible for future work (e.g. \cite{Gao2024-efficienttgnn, Gau2024-SimpleTemporalGNN}).

There are also opportunities to improve the downstream treatment of embeddings, such as deploying the CF-GNN algorithm \citep{huang2024uncertainty} to reduce prediction set sizes, or weighting calibration points according to their exchangeability with the test point \citep{barber2023conformal,clarkson2023distribution}. 

Our use of unfolding originates from a body of statistical literature on spectral embedding and tensor decomposition, where it may be useful to observe that inductive embedding is straightforward: the singular vectors provide a projection operator which can be applied to new nodes. Being able to somehow emulate this operation in unfolded GNNs could provide a path towards fully inductive inference.

\FloatBarrier

\subsubsection*{Acknowledgements}

Ed Davis gratefully acknowledges support by LV= Insurance (Allianz Group) and the Centre for Doctoral Training in Computational Statistics and Data Science (Compass, EPSRC Grant number EP/S023569/1). Compass is funded by United Kingdom Research and Innovation (UKRI) through the Engineering and Physical Sciences Research Council (EPSRC), \url{https://www.ukri.org/councils/epsrc}. This work was carried out using the computational facilities of the Advanced Computing Research Centre, University of Bristol - \url{http://www.bris.ac.uk/acrc/}. PR-D gratefully acknowledges support from the NeST programme grant, EPSRC grant number EP/X002195/1.

\FloatBarrier
\bibliography{references}

\begin{thebibliography}{64}
\providecommand{\natexlab}[1]{#1}
\providecommand{\url}[1]{\texttt{#1}}
\expandafter\ifx\csname urlstyle\endcsname\relax
  \providecommand{\doi}[1]{doi: #1}\else
  \providecommand{\doi}{doi: \begingroup \urlstyle{rm}\Url}\fi

\bibitem[Ope()]{OpenGraph}
Open graph benchmark.
\newblock \url{https://ogb.stanford.edu/}.
\newblock Accessed: 2024-05-22.

\bibitem[Pyt()]{Pytorch}
Py{G} documentation.
\newblock \url{https://pytorch-geometric.readthedocs.io/en/latest/}.
\newblock Accessed: 2024-05-22.

\bibitem[Abbe et~al.(2020)Abbe, Fan, Wang, and Zhong]{abbe2020entrywise}
Emmanuel Abbe, Jianqing Fan, Kaizheng Wang, and Yiqiao Zhong.
\newblock Entrywise eigenvector analysis of random matrices with low expected rank.
\newblock \emph{Annals of statistics}, 48\penalty0 (3):\penalty0 1452, 2020.

\bibitem[Agterberg and Zhang(2022)]{agterberg2022estimating}
Joshua Agterberg and Anru Zhang.
\newblock Estimating higher-order mixed memberships via the $\ell_{2,\infty}$ tensor perturbation bound.
\newblock \emph{arXiv preprint arXiv:2212.08642}, 2022.

\bibitem[Angelopoulos et~al.(2020)Angelopoulos, Bates, Malik, and Jordan]{angelopoulos2020uncertainty}
Anastasios Angelopoulos, Stephen Bates, Jitendra Malik, and Michael~I Jordan.
\newblock Uncertainty sets for image classifiers using conformal prediction.
\newblock \emph{arXiv preprint arXiv:2009.14193}, 2020.

\bibitem[Angelopoulos and Bates(2021)]{angelopoulos2021gentle}
Anastasios~N Angelopoulos and Stephen Bates.
\newblock A gentle introduction to conformal prediction and distribution-free uncertainty quantification.
\newblock \emph{arXiv preprint arXiv:2107.07511}, 2021.

\bibitem[Barber et~al.(2023)Barber, Candes, Ramdas, and Tibshirani]{barber2023conformal}
Rina~Foygel Barber, Emmanuel~J Candes, Aaditya Ramdas, and Ryan~J Tibshirani.
\newblock Conformal prediction beyond exchangeability.
\newblock \emph{The Annals of Statistics}, 51\penalty0 (2):\penalty0 816--845, 2023.

\bibitem[Berg et~al.(2017)Berg, Kipf, and Welling]{berg2017graph}
Rianne van~den Berg, Thomas~N Kipf, and Max Welling.
\newblock Graph convolutional matrix completion.
\newblock \emph{arXiv preprint arXiv:1706.02263}, 2017.

\bibitem[Bilot et~al.(2023)Bilot, El~Madhoun, Al~Agha, and Zouaoui]{bilot2023graph}
Tristan Bilot, Nour El~Madhoun, Khaldoun Al~Agha, and Anis Zouaoui.
\newblock Graph neural networks for intrusion detection: A survey.
\newblock \emph{IEEE Access}, 2023.

\bibitem[Boll et~al.(2024)Boll, Amirahmadi, Ghazani, de~Morais, de~Freitas, Soliman, Etminani, Byttner, and Recamonde-Mendoza]{boll2024graph}
Helo{\'\i}sa~Oss Boll, Ali Amirahmadi, Mirfarid~Musavian Ghazani, Wagner~Ourique de~Morais, Edison~Pignaton de~Freitas, Amira Soliman, Kobra Etminani, Stefan Byttner, and Mariana Recamonde-Mendoza.
\newblock Graph neural networks for clinical risk prediction based on electronic health records: A survey.
\newblock \emph{Journal of Biomedical Informatics}, page 104616, 2024.

\bibitem[Borisyuk et~al.(2024)Borisyuk, He, Ouyang, Ramezani, Du, Hou, Jiang, Pasumarthy, Bannur, Tiwana, et~al.]{borisyuk2024lignn}
Fedor Borisyuk, Shihai He, Yunbo Ouyang, Morteza Ramezani, Peng Du, Xiaochen Hou, Chengming Jiang, Nitin Pasumarthy, Priya Bannur, Birjodh Tiwana, et~al.
\newblock Li{GNN}: Graph neural networks at {L}inked{I}n.
\newblock \emph{arXiv preprint arXiv:2402.11139}, 2024.

\bibitem[Bowman and Huang(2021)]{bowman2021towards}
Benjamin Bowman and H~Howie Huang.
\newblock Towards next-generation cybersecurity with graph {AI}.
\newblock \emph{ACM SIGOPS Operating Systems Review}, 55\penalty0 (1):\penalty0 61--67, 2021.

\bibitem[Cape et~al.(2019)Cape, Tang, and Priebe]{cape2019two}
Joshua Cape, Minh Tang, and Carey~E. Priebe.
\newblock {The two-to-infinity norm and singular subspace geometry with applications to high-dimensional statistics}.
\newblock \emph{The Annals of Statistics}, 47\penalty0 (5):\penalty0 2405 -- 2439, 2019.
\newblock \doi{10.1214/18-AOS1752}.
\newblock URL \url{https://doi.org/10.1214/18-AOS1752}.

\bibitem[Clarkson(2023)]{clarkson2023distribution}
Jase Clarkson.
\newblock Distribution free prediction sets for node classification.
\newblock In \emph{International Conference on Machine Learning}, pages 6268--6278. PMLR, 2023.

\bibitem[Davies et~al.(2021)Davies, Veli{\v{c}}kovi{\'c}, Buesing, Blackwell, Zheng, Toma{\v{s}}ev, Tanburn, Battaglia, Blundell, Juh{\'a}sz, et~al.]{davies2021advancing}
Alex Davies, Petar Veli{\v{c}}kovi{\'c}, Lars Buesing, Sam Blackwell, Daniel Zheng, Nenad Toma{\v{s}}ev, Richard Tanburn, Peter Battaglia, Charles Blundell, Andr{\'a}s Juh{\'a}sz, et~al.
\newblock Advancing mathematics by guiding human intuition with {AI}.
\newblock \emph{Nature}, 600\penalty0 (7887):\penalty0 70--74, 2021.

\bibitem[Davis et~al.(2023)Davis, Gallagher, Lawson, and Rubin-Delanchy]{davis2023}
Ed~Davis, Ian Gallagher, Daniel~John Lawson, and Patrick Rubin-Delanchy.
\newblock A simple and powerful framework for stable dynamic network embedding.
\newblock \emph{arXiv preprint arXiv:2311.09251}, 2023.

\bibitem[De~Lathauwer et~al.(2000)De~Lathauwer, De~Moor, and Vandewalle]{de2000multilinear}
Lieven De~Lathauwer, Bart De~Moor, and Joos Vandewalle.
\newblock A multilinear singular value decomposition.
\newblock \emph{SIAM journal on Matrix Analysis and Applications}, 21\penalty0 (4):\penalty0 1253--1278, 2000.

\bibitem[Derrow-Pinion et~al.(2021)Derrow-Pinion, She, Wong, Lange, Hester, Perez, Nunkesser, Lee, Guo, Wiltshire, et~al.]{derrow2021eta}
Austin Derrow-Pinion, Jennifer She, David Wong, Oliver Lange, Todd Hester, Luis Perez, Marc Nunkesser, Seongjae Lee, Xueying Guo, Brett Wiltshire, et~al.
\newblock {ETA} prediction with graph neural networks in google maps.
\newblock In \emph{Proceedings of the 30th ACM International Conference on Information \& Knowledge Management}, pages 3767--3776, 2021.

\bibitem[Duvenaud et~al.(2015)Duvenaud, Maclaurin, Iparraguirre, Bombarell, Hirzel, Aspuru-Guzik, and Adams]{duvenaud2015convolutional}
David~K Duvenaud, Dougal Maclaurin, Jorge Iparraguirre, Rafael Bombarell, Timothy Hirzel, Al{\'a}n Aspuru-Guzik, and Ryan~P Adams.
\newblock Convolutional networks on graphs for learning molecular fingerprints.
\newblock \emph{Advances in neural information processing systems}, 28, 2015.

\bibitem[Fey and Lenssen(2019)]{torch_geometric}
Matthias Fey and Jan~E. Lenssen.
\newblock Fast graph representation learning with {PyTorch Geometric}.
\newblock In \emph{ICLR Workshop on Representation Learning on Graphs and Manifolds}, 2019.

\bibitem[Foygel~Barber et~al.(2021)Foygel~Barber, Candes, Ramdas, and Tibshirani]{foygel2021limits}
Rina Foygel~Barber, Emmanuel~J Candes, Aaditya Ramdas, and Ryan~J Tibshirani.
\newblock The limits of distribution-free conditional predictive inference.
\newblock \emph{Information and Inference: A Journal of the IMA}, 10\penalty0 (2):\penalty0 455--482, 2021.

\bibitem[Gallagher et~al.(2021)Gallagher, Jones, and Rubin-Delanchy]{NEURIPS2021_5446f217}
Ian Gallagher, Andrew Jones, and Patrick Rubin-Delanchy.
\newblock Spectral embedding for dynamic networks with stability guarantees.
\newblock In M.~Ranzato, A.~Beygelzimer, Y.~Dauphin, P.S. Liang, and J.~Wortman Vaughan, editors, \emph{Advances in Neural Information Processing Systems}, volume~34, pages 10158--10170. Curran Associates, Inc., 2021.
\newblock URL \url{https://proceedings.neurips.cc/paper_files/paper/2021/file/5446f217e9504bc593ad9dcf2ec88dda-Paper.pdf}.

\bibitem[Gao et~al.(2024{\natexlab{a}})Gao, Li, Shen, Shao, and Chen]{Gao2024-efficienttgnn}
Shihong Gao, Yiming Li, Yanyan Shen, Yingxia Shao, and Lei Chen.
\newblock {ETC}: Efficient training of temporal graph neural networks over large-scale dynamic graphs.
\newblock \emph{Proc. VLDB Endow.}, 17\penalty0 (5):\penalty0 1060–1072, May 2024{\natexlab{a}}.
\newblock ISSN 2150-8097.
\newblock \doi{10.14778/3641204.3641215}.
\newblock URL \url{https://doi.org/10.14778/3641204.3641215}.

\bibitem[Gao et~al.(2024{\natexlab{b}})Gao, Li, Zhang, Shen, Shao, and Chen]{Gau2024-SimpleTemporalGNN}
Shihong Gao, Yiming Li, Xin Zhang, Yanyan Shen, Yingxia Shao, and Lei Chen.
\newblock Simple: Efficient temporal graph neural network training at scale with dynamic data placement.
\newblock \emph{Proc. ACM Manag. Data}, 2\penalty0 (3), May 2024{\natexlab{b}}.
\newblock \doi{10.1145/3654977}.
\newblock URL \url{https://doi.org/10.1145/3654977}.

\bibitem[Gibbs et~al.(2023)Gibbs, Cherian, and Cand{\`e}s]{gibbs2023conformal}
Isaac Gibbs, John~J Cherian, and Emmanuel~J Cand{\`e}s.
\newblock Conformal prediction with conditional guarantees.
\newblock \emph{arXiv preprint arXiv:2305.12616}, 2023.

\bibitem[Gilmer et~al.(2017)Gilmer, Schoenholz, Riley, Vinyals, and Dahl]{gilmer2017neural}
Justin Gilmer, Samuel~S Schoenholz, Patrick~F Riley, Oriol Vinyals, and George~E Dahl.
\newblock Neural message passing for quantum chemistry.
\newblock In \emph{International conference on machine learning}, pages 1263--1272. PMLR, 2017.

\bibitem[Hamilton et~al.(2017)Hamilton, Ying, and Leskovec]{hamilton2017inductive}
Will Hamilton, Zhitao Ying, and Jure Leskovec.
\newblock Inductive representation learning on large graphs.
\newblock \emph{Advances in neural information processing systems}, 30, 2017.

\bibitem[Hamilton(2020)]{hamilton2020graph}
William~L Hamilton.
\newblock \emph{Graph representation learning}.
\newblock Morgan \& Claypool Publishers, 2020.

\bibitem[He et~al.(2022)He, Ji, and Huang]{he2022illuminati}
Haoyu He, Yuede Ji, and H~Howie Huang.
\newblock Illuminati: Towards explaining graph neural networks for cybersecurity analysis.
\newblock In \emph{2022 IEEE 7th European Symposium on Security and Privacy (EuroS\&P)}, pages 74--89. IEEE, 2022.

\bibitem[Huang et~al.(2023)Huang, Xi, Zhang, Yao, Qiu, and Wei]{huang2023conformal}
Jianguo Huang, Huajun Xi, Linjun Zhang, Huaxiu Yao, Yue Qiu, and Hongxin Wei.
\newblock Conformal prediction for deep classifier via label ranking.
\newblock \emph{arXiv preprint arXiv:2310.06430}, 2023.

\bibitem[Huang et~al.(2024{\natexlab{a}})Huang, Jin, Candes, and Leskovec]{huang2024uncertainty}
Kexin Huang, Ying Jin, Emmanuel Candes, and Jure Leskovec.
\newblock Uncertainty quantification over graph with conformalized graph neural networks.
\newblock \emph{Advances in Neural Information Processing Systems}, 36, 2024{\natexlab{a}}.

\bibitem[Huang et~al.(2024{\natexlab{b}})Huang, Poursafaei, Danovitch, Fey, Hu, Rossi, Leskovec, Bronstein, Rabusseau, and Rabbany]{huang2024temporal}
Shenyang Huang, Farimah Poursafaei, Jacob Danovitch, Matthias Fey, Weihua Hu, Emanuele Rossi, Jure Leskovec, Michael Bronstein, Guillaume Rabusseau, and Reihaneh Rabbany.
\newblock Temporal graph benchmark for machine learning on temporal graphs.
\newblock \emph{Advances in Neural Information Processing Systems}, 36, 2024{\natexlab{b}}.

\bibitem[Jones and Rubin-Delanchy(2020)]{jones2020multilayer}
Andrew Jones and Patrick Rubin-Delanchy.
\newblock The multilayer random dot product graph.
\newblock \emph{arXiv preprint arXiv:2007.10455}, 2020.

\bibitem[Jumper et~al.(2021)Jumper, Evans, Pritzel, Green, Figurnov, Ronneberger, Tunyasuvunakool, Bates, {\v{Z}}{\'\i}dek, Potapenko, et~al.]{jumper2021highly}
John Jumper, Richard Evans, Alexander Pritzel, Tim Green, Michael Figurnov, Olaf Ronneberger, Kathryn Tunyasuvunakool, Russ Bates, Augustin {\v{Z}}{\'\i}dek, Anna Potapenko, et~al.
\newblock Highly accurate protein structure prediction with {A}lpha{F}old.
\newblock \emph{Nature}, 596\penalty0 (7873):\penalty0 583--589, 2021.

\bibitem[Kipf and Welling(2016)]{kipf2016semi}
Thomas~N Kipf and Max Welling.
\newblock Semi-supervised classification with graph convolutional networks.
\newblock \emph{arXiv preprint arXiv:1609.02907}, 2016.

\bibitem[Lei and Wasserman(2014)]{lei2014distribution}
Jing Lei and Larry Wasserman.
\newblock Distribution-free prediction bands for non-parametric regression.
\newblock \emph{Journal of the Royal Statistical Society Series B: Statistical Methodology}, 76\penalty0 (1):\penalty0 71--96, 2014.

\bibitem[Lei et~al.(2013)Lei, Robins, and Wasserman]{lei2013distribution}
Jing Lei, James Robins, and Larry Wasserman.
\newblock Distribution-free prediction sets.
\newblock \emph{Journal of the American Statistical Association}, 108\penalty0 (501):\penalty0 278--287, 2013.

\bibitem[Lei et~al.(2018)Lei, G'Sell, Rinaldo, Tibshirani, and Wasserman]{Lei2018regression}
Jing Lei, Max G'Sell, Alessandro Rinaldo, Ryan~J. Tibshirani, and Larry Wasserman.
\newblock Distribution-free predictive inference for regression.
\newblock \emph{Journal of the American Statistical Association}, 113\penalty0 (523):\penalty0 1094--1111, 2018.
\newblock \doi{10.1080/01621459.2017.1307116}.

\bibitem[Liu et~al.(2020)Liu, Li, Peng, He, and Philip]{liu2020heterogeneous}
Zheng Liu, Xiaohan Li, Hao Peng, Lifang He, and S~Yu Philip.
\newblock Heterogeneous similarity graph neural network on electronic health records.
\newblock In \emph{2020 IEEE international conference on big data (big data)}, pages 1196--1205. IEEE, 2020.

\bibitem[MacDonald et~al.(2015)MacDonald, Brauman, Sun, Carlson, Cassidy, Gerber, and West]{macdonald2015rethinking}
Graham~K MacDonald, Kate~A Brauman, Shipeng Sun, Kimberly~M Carlson, Emily~S Cassidy, James~S Gerber, and Paul~C West.
\newblock Rethinking agricultural trade relationships in an era of globalization.
\newblock \emph{BioScience}, 65\penalty0 (3):\penalty0 275--289, 2015.

\bibitem[Nair et~al.(2024)Nair, Liu, Vajiac, Olligschlaeger, Chau, Cazzolato, Jones, Faloutsos, and Rabbany]{nair2024t}
Pratheeksha Nair, Javin Liu, Catalina Vajiac, Andreas Olligschlaeger, Duen~Horng Chau, Mirela Cazzolato, Cara Jones, Christos Faloutsos, and Reihaneh Rabbany.
\newblock T-net: Weakly supervised graph learning for combatting human trafficking.
\newblock In \emph{Proceedings of the AAAI Conference on Artificial Intelligence}, volume 38(20), pages 22276--22284, 2024.

\bibitem[Olive et~al.(2022)Olive, Strohmeier, and Lübbe]{olive2022crowdsourced}
Xavier Olive, Martin Strohmeier, and Jannis Lübbe.
\newblock {Crowdsourced air traffic data from The OpenSky Network 2020 [CC-BY]}, January 2022.
\newblock URL \url{https://doi.org/10.5281/zenodo.5815448}.

\bibitem[Peng et~al.(2018)Peng, Li, He, Liu, Bao, Wang, Song, and Yang]{peng2018large}
Hao Peng, Jianxin Li, Yu~He, Yaopeng Liu, Mengjiao Bao, Lihong Wang, Yangqiu Song, and Qiang Yang.
\newblock Large-scale hierarchical text classification with recursively regularized deep graph-{CNN}.
\newblock In \emph{Proceedings of the 2018 world wide web conference}, pages 1063--1072, 2018.

\bibitem[Phan et~al.(2023)Phan, Nguyen, and Hwang]{phan2023fake}
Huyen~Trang Phan, Ngoc~Thanh Nguyen, and Dosam Hwang.
\newblock Fake news detection: A survey of graph neural network methods.
\newblock \emph{Applied Soft Computing}, page 110235, 2023.

\bibitem[Romano et~al.(2020)Romano, Sesia, and Candes]{romano2020classification}
Yaniv Romano, Matteo Sesia, and Emmanuel Candes.
\newblock Classification with valid and adaptive coverage.
\newblock \emph{Advances in Neural Information Processing Systems}, 33:\penalty0 3581--3591, 2020.

\bibitem[Rubin-Delanchy et~al.(2022)Rubin-Delanchy, Cape, Tang, and Priebe]{rubin2022statistical}
Patrick Rubin-Delanchy, Joshua Cape, Minh Tang, and Carey~E Priebe.
\newblock A statistical interpretation of spectral embedding: the generalised random dot product graph.
\newblock \emph{Journal of the Royal Statistical Society Series B: Statistical Methodology}, 84\penalty0 (4):\penalty0 1446--1473, 2022.

\bibitem[Sanchez-Lengeling et~al.(2021)Sanchez-Lengeling, Reif, Pearce, and Wiltschko]{sanchez-lengeling2021a}
Benjamin Sanchez-Lengeling, Emily Reif, Adam Pearce, and Alexander~B. Wiltschko.
\newblock A gentle introduction to graph neural networks.
\newblock \emph{Distill}, 2021.
\newblock \doi{10.23915/distill.00033}.
\newblock https://distill.pub/2021/gnn-intro.

\bibitem[Sarlin et~al.(2020)Sarlin, DeTone, Malisiewicz, and Rabinovich]{sarlin2020superglue}
Paul-Edouard Sarlin, Daniel DeTone, Tomasz Malisiewicz, and Andrew Rabinovich.
\newblock Superglue: Learning feature matching with graph neural networks.
\newblock In \emph{Proceedings of the IEEE/CVF conference on computer vision and pattern recognition}, pages 4938--4947, 2020.

\bibitem[Shafer and Vovk(2008)]{shafer2008tutorial}
Glenn Shafer and Vladimir Vovk.
\newblock A tutorial on conformal prediction.
\newblock \emph{Journal of Machine Learning Research}, 9\penalty0 (3), 2008.

\bibitem[Stehl{\'e} et~al.(2011)Stehl{\'e}, Voirin, Barrat, Cattuto, Isella, Pinton, Quaggiotto, Van~den Broeck, R{\'e}gis, Lina, et~al.]{school}
Juliette Stehl{\'e}, Nicolas Voirin, Alain Barrat, Ciro Cattuto, Lorenzo Isella, Jean-Fran{\c{c}}ois Pinton, Marco Quaggiotto, Wouter Van~den Broeck, Corinne R{\'e}gis, Bruno Lina, et~al.
\newblock High-resolution measurements of face-to-face contact patterns in a primary school.
\newblock \emph{PloS one}, 6\penalty0 (8):\penalty0 e23176, 2011.

\bibitem[Stokes et~al.(2020)Stokes, Yang, Swanson, Jin, Cubillos-Ruiz, Donghia, MacNair, French, Carfrae, Bloom-Ackermann, et~al.]{stokes2020deep}
Jonathan~M Stokes, Kevin Yang, Kyle Swanson, Wengong Jin, Andres Cubillos-Ruiz, Nina~M Donghia, Craig~R MacNair, Shawn French, Lindsey~A Carfrae, Zohar Bloom-Ackermann, et~al.
\newblock A deep learning approach to antibiotic discovery.
\newblock \emph{Cell}, 180\penalty0 (4):\penalty0 688--702, 2020.

\bibitem[Szekely et~al.(2015)Szekely, Knoblock, Slepicka, Philpot, Singh, Yin, Kapoor, Natarajan, Marcu, Knight, et~al.]{szekely2015building}
Pedro Szekely, Craig~A Knoblock, Jason Slepicka, Andrew Philpot, Amandeep Singh, Chengye Yin, Dipsy Kapoor, Prem Natarajan, Daniel Marcu, Kevin Knight, et~al.
\newblock Building and using a knowledge graph to combat human trafficking.
\newblock In \emph{The Semantic Web-ISWC 2015: 14th International Semantic Web Conference, Bethlehem, PA, USA, October 11-15, 2015, Proceedings, Part II 14}, pages 205--221. Springer, 2015.

\bibitem[Tan et~al.(2022)Tan, Yu, Chen, and Poor]{tan2022deeptrace}
Chee~Wei Tan, Pei-Duo Yu, Siya Chen, and H~Vincent Poor.
\newblock Deeptrace: Learning to optimize contact tracing in epidemic networks with graph neural networks.
\newblock \emph{arXiv preprint arXiv:2211.00880}, 2022.

\bibitem[Tibshirani et~al.(2019)Tibshirani, Foygel~Barber, Candes, and Ramdas]{tibshirani2019conformal}
Ryan~J Tibshirani, Rina Foygel~Barber, Emmanuel Candes, and Aaditya Ramdas.
\newblock Conformal prediction under covariate shift.
\newblock \emph{Advances in neural information processing systems}, 32, 2019.

\bibitem[Veli{\v{c}}kovi{\'c} et~al.(2017)Veli{\v{c}}kovi{\'c}, Cucurull, Casanova, Romero, Lio, and Bengio]{velivckovic2017graph}
Petar Veli{\v{c}}kovi{\'c}, Guillem Cucurull, Arantxa Casanova, Adriana Romero, Pietro Lio, and Yoshua Bengio.
\newblock Graph attention networks.
\newblock \emph{arXiv preprint arXiv:1710.10903}, 2017.

\bibitem[Vovk et~al.(2005)Vovk, Gammerman, and Shafer]{vovk2005algorithmic}
Vladimir Vovk, Alexander Gammerman, and Glenn Shafer.
\newblock \emph{Algorithmic learning in a random world}, volume~29.
\newblock Springer, 2005.

\bibitem[Wang et~al.(2023)Wang, Li, Padilla, Yu, and Rinaldo]{wang2023multilayer}
Fan Wang, Wanshan Li, Oscar Hernan~Madrid Padilla, Yi~Yu, and Alessandro Rinaldo.
\newblock Multilayer random dot product graphs: Estimation and online change point detection.
\newblock \emph{arXiv preprint arXiv:2306.15286}, 2023.

\bibitem[Wu et~al.(2023)Wu, Chen, Shen, Guo, Gao, Li, Pei, Long, et~al.]{wu2023graph}
Lingfei Wu, Yu~Chen, Kai Shen, Xiaojie Guo, Hanning Gao, Shucheng Li, Jian Pei, Bo~Long, et~al.
\newblock Graph neural networks for natural language processing: A survey.
\newblock \emph{Foundations and Trends{\textregistered} in Machine Learning}, 16\penalty0 (2):\penalty0 119--328, 2023.

\bibitem[Wu et~al.(2022)Wu, Sun, Zhang, Xie, and Cui]{wu2022graph}
Shiwen Wu, Fei Sun, Wentao Zhang, Xu~Xie, and Bin Cui.
\newblock Graph neural networks in recommender systems: a survey.
\newblock \emph{ACM Computing Surveys}, 55\penalty0 (5):\penalty0 1--37, 2022.

\bibitem[Xu and Hero(2014)]{xu2014dynamic}
Kevin~S Xu and Alfred~O Hero.
\newblock Dynamic stochastic blockmodels for time-evolving social networks.
\newblock \emph{IEEE Journal of Selected Topics in Signal Processing}, 8\penalty0 (4):\penalty0 552--562, 2014.

\bibitem[Xu et~al.(2018)Xu, Li, Tian, Sonobe, Kawarabayashi, and Jegelka]{xu2018representation}
Keyulu Xu, Chengtao Li, Yonglong Tian, Tomohiro Sonobe, Ken-ichi Kawarabayashi, and Stefanie Jegelka.
\newblock Representation learning on graphs with jumping knowledge networks.
\newblock In \emph{International conference on machine learning}, pages 5453--5462. PMLR, 2018.

\bibitem[Yang et~al.(2011)Yang, Chi, Zhu, Gong, and Jin]{yang2011detecting}
Tianbao Yang, Yun Chi, Shenghuo Zhu, Yihong Gong, and Rong Jin.
\newblock Detecting communities and their evolutions in dynamic social networks—a {B}ayesian approach.
\newblock \emph{Machine learning}, 82:\penalty0 157--189, 2011.

\bibitem[Zargarbashi and Bojchevski(2023)]{zargarbashi2023conformal}
Soroush~H Zargarbashi and Aleksandar Bojchevski.
\newblock Conformal inductive graph neural networks.
\newblock In \emph{The Twelfth International Conference on Learning Representations}, 2023.

\bibitem[Zhang and Xia(2018)]{zhang2018tensor}
Anru Zhang and Dong Xia.
\newblock Tensor {SVD}: Statistical and computational limits.
\newblock \emph{IEEE Transactions on Information Theory}, 64\penalty0 (11):\penalty0 7311--7338, 2018.

\end{thebibliography}

\appendix

\FloatBarrier

\section{Code availability}\label{app:code}
Python code for reproducing experiments is available at \url{https://github.com/edwarddavis1/valid_conformal_for_dynamic_gnn}.

\section{Supporting theory for Section~\ref{sec:t_and_m}}
\subsection{Transductive theory}
Our arguments are similar in style to \citep{barber2023conformal}, showing validity of full conformal inference, and then split conformal as a special case of full conformal.

We observe $Y_w$ for all $w \in \nu$ apart from $Y_{\nu_K}$, where the query point $K$ is chosen independently and uniformly at random among $\{1, \ldots, m\}$. 

\subsubsection{Full conformal prediction}
Suppose we have access to an algorithm $\*A(g, x, y)$ which takes as input:
\begin{enumerate}
\item $g$, a dynamic graph on $n$ nodes and $T$ time points;
\item $x$, a $|\nu|$-long sequence of feature vectors;
\item $y$, a $|\nu|$-long sequence of labels;
\end{enumerate}
and returns a sequence $r = (r_1, \ldots, r_{m})$ of real values, which will act as non-conformity scores.

\begin{algorithm}[h!]
 \caption{Full conformal inference} \label{alg:full_conformal}
 \begin{flushleft}
  \textbf{Input:} Dynamic graph $G$, index set $\nu$, attributes $(X_{w}, w \in \nu)$, labels $(Y_{w}, w \in \nu \setminus \nu_K)$, confidence level $\alpha$, algorithm $\*A$, query point $K$
 \end{flushleft}
 \begin{algorithmic}[1]
  \State Let $\hat C = \*Y$ 
  \For{$y \in \*Y$}
  \State Let
  \[X = (X_{\nu_1}, \ldots, X_{\nu_{|\nu|}}); \quad Y^{+} = (Y_{\nu_1}, \ldots, Y_{\nu_{K-1}}, y, Y_{\nu_{K+1}}, \ldots, Y_{\nu_{|\nu|}}).\]
  \State Compute
  \[(R_1, \ldots, R_{m}) = \*A(G, X, Y^+).\]
  \State Remove $y$ from $\hat C$ if
  \[\text{$R_K$ among $\lfloor \alpha (m) \rfloor$ largest of $R_1, \ldots, R_{m}$}.\]
  \EndFor
\end{algorithmic}
 \begin{flushleft}
  \textbf{Output:} Prediction set $\hat C$
 \end{flushleft}
\end{algorithm}
\begin{lem}\label{lem:full_validity}
The prediction set output by Algorithm~\ref{alg:full_conformal} is valid.
\end{lem}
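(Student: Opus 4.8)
The plan is to follow the classical exchangeability argument for full conformal prediction, adapted to the dynamic-graph setting. The central observation is that the non-conformity scores $(R_1, \ldots, R_m)$ produced at the correct candidate label $y = Y_{\nu_K}$ form an exchangeable collection, so that the rank of $R_K$ among them is (stochastically) uniform. First I would fix attention on the run of Algorithm~\ref{alg:full_conformal} corresponding to the true label, i.e.\ $y = Y_{\nu_K}$, so that $Y^+ = (Y_{\nu_1}, \ldots, Y_{\nu_{|\nu|}})$ is exactly the true label sequence and does not depend on $K$. In this run, $(R_1, \ldots, R_m) = \*A(G, X, Y^+)$ is a deterministic function of data that does \emph{not} involve $K$ at all; the only place $K$ enters the coverage event is through which coordinate $R_K$ we compare against the others.

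Next I would establish the key distributional symmetry. By the transductive assumption, $K$ is drawn uniformly from $\{1, \ldots, m\}$ independently of all the data $(G, X, Y^+)$. Hence, conditional on the realized vector $(R_1, \ldots, R_m)$, the index $K$ is uniform on $\{1, \ldots, m\}$, so $R_K$ is a uniformly random element of the multiset $\{R_1, \ldots, R_m\}$. The event that $y = Y_{\nu_K}$ is removed from $\hat C$ is precisely the event that $R_K$ lies among the $\lfloor \alpha m \rfloor$ largest of $R_1, \ldots, R_m$. A uniformly chosen coordinate lands among the top $\lfloor \alpha m \rfloor$ with probability at most $\lfloor \alpha m \rfloor / m \leq \alpha$ (being careful about ties: if there are ties at the threshold, the number of coordinates counted as ``among the $\lfloor \alpha m \rfloor$ largest'' is still at most $\lfloor \alpha m \rfloor$ under the convention used, or one takes the standard tie-breaking bound $\lfloor \alpha m \rfloor / m$). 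Therefore
\[
\Prob\bigl(Y_{\nu_K} \notin \hat C\bigr) \;\leq\; \Prob\bigl(R_K \text{ among } \lfloor \alpha m \rfloor \text{ largest of } R_1, \ldots, R_m\bigr) \;\leq\; \alpha,
\]
where the first inequality holds because $Y_{\nu_K} \notin \hat C$ implies removal occurred on the true-label run. Taking complements gives $\Prob(Y_{\nu_K} \in \hat C) \geq 1 - \alpha$, which is the claimed validity with $w_{\text{test}} = \nu_K$.

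The step I expect to require the most care is the tie-breaking bookkeeping in the quantile comparison, and pinning down exactly what ``among the $\lfloor \alpha m \rfloor$ largest'' means when scores coincide — this is where analogous arguments in \citep{barber2023conformal} are most delicate, and one wants the conservative convention so the bound $\lfloor \alpha m \rfloor / m \leq \alpha$ goes through cleanly. A secondary point worth spelling out is that $\*A$ is permitted to depend on $(G, X, Y^+)$ in an arbitrary symmetric-in-nothing way — no exchangeability of $\*A$'s output across coordinates is needed here, because the randomness is supplied entirely by the uniform draw of $K$; this is what distinguishes the transductive argument from the semi-inductive one (Theorem~\ref{thm:inductive}), and it is worth a sentence to make the contrast explicit. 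Finally I would remark that Algorithm~\ref{alg:split_conformal} is recovered as the special case where $\*A$ ignores the test/calibration split structure beyond computing $r(\hat{\+V}^{\text{UNF}}_w, \cdot)$ on a fixed learned embedding, so Lemma~\ref{lem:validity} follows immediately from Lemma~\ref{lem:full_validity}.
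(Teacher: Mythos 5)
Your argument is correct and is essentially the paper's own proof: the paper likewise observes that $K \sim \text{uniform}([m])$ independently of the data, so the event that $R_K$ falls among the $\lfloor \alpha m \rfloor$ largest scores has probability at most $\alpha$, and then takes complements. Your additional remarks --- that the true-label run produces a score vector not depending on $K$, the tie-breaking convention, and the contrast with the semi-inductive case where exchangeability must be assumed --- are all consistent with (and make explicit) what the paper leaves implicit.
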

\begin{proof}
  Because $K \overset{ind}{\sim} \text{uniform}([m])$, the event
  \[E = \text{``$R_K$ among $\lfloor \alpha (m) \rfloor$ largest of $R_1, \ldots, R_{m}$''},\]
  occurs with probability
  \[\Prob(E) \leq \alpha.\]
  But $E$ occurs if and only if $Y_{\nu_{K}} \not\in \hat C$. Therefore,
  \[\Prob(Y_{\nu_{K}} \in \hat C) = 1 - \Prob(Y_{\nu_{K}} \not\in \hat C) = 1-\Prob(E) \geq 1-\alpha.\]
\end{proof}
What this formalism makes clear is that the validity of full conformal inference in a transductive setting has nothing to do with exchangeability or any form of symmetry in $\*A$ (neither of which were assumed).
\subsubsection{Split conformal prediction}
  \begin{algorithm}[h!]
 \caption{Split conformal inference (abstract version)} \label{alg:split_conformal_bis}
 \begin{flushleft}
  \textbf{Input:} Dynamic graph $G$, index set $\nu$, attributes $(X_{w}, w \in \nu)$, labels $(Y_{w}, w \in \nu \setminus \nu_K)$, confidence level $\alpha$, prediction function $\hat f$, non-conformity function $r$, query point $K$
 \end{flushleft}
 \begin{algorithmic}[1]
   \State Let $\hat C = \*Y$
   \State Let $R_i = r(\hat f(\nu_{i}), Y_{\nu_i})$ for $i \in [m]\setminus K$
   \State Let
   \[\hat q = \text{$\lfloor \alpha (m) \rfloor$ largest value in $R_i, i \in [m]\setminus K$}\]   
  \For{$y \in \*Y$}
  \State Let $R_K = r(\hat f(\nu_{K}), y)$
  \State Remove $y$ from $\hat C$ if $R_K \geq \hat q$
  \EndFor
\end{algorithmic}
 \begin{flushleft}
   \textbf{Output:} Prediction set $\hat C$
 \end{flushleft}
\end{algorithm}

Suppose we construct $\*A$ according to a split training/calibration routine, as follows. First, using $g$, $\nu$, the feature vectors $(X_{w}, w \in \nu)$, and the labels $Y_{\nu_{\ell}}, \ell > m$, learn a prediction function $\hat f : \nu \rightarrow \R^{d}$. 

The vector $\hat f(\nu_\ell)$ might, for example, relate to predicted class probabilities via
\[ \hat \Prob(Y_{\nu_\ell} = k) = \left[\text{softmax}(\hat f(\nu_\ell)_1, \ldots, \hat f(\nu_\ell)_{d})\right]_k,\]
in which case $\hat f(\nu_\ell)$ is often known as an embedding.

Second, for a non-conformity function $r: \R^{d} \times \*Y \rightarrow \R$, a dynamic graph $g$, and $x,y$ of length $|\nu|$, define
\[\left[\*A(g, x, y)\right]_{\ell} = r(\hat f(\nu_\ell), y_\ell), \quad \ell \in [m].\]
Then Algorithm~\ref{alg:full_conformal} can potentially be made much more efficient, as shown in Algorithm~\ref{alg:split_conformal}. This is nothing but the usual split conformal inference algorithm, where the labels $Y_{\nu_{\ell}}, \ell > m$ are being used for training, and $Y_{\nu_{\ell}}, \ell \in [m]\setminus K$ for calibration.

Thus split conformal inference is a special case of full conformal inference, in which special structure is imposed on $\*A$, and must therefore inherit its general validity (Lemma~\ref{lem:validity}). 

\subsection{Semi-inductive theory}
Otherwise keeping the same setup, suppose $K$ is not uniformly chosen but is instead fixed, to (say) $K=1$.



\begin{assumption}\label{as:symmetry_bis}
  The algorithm $\*A(g,x,y)$ is symmetric in its input. For any permutation  $\pi: \nu \rightarrow \nu$,
  \[\*A(\pi g, \pi x, \pi y) = \pi \*A(g, x, y)\]
\end{assumption}

\begin{thm}\label{thm:inductive_bis}
Under assumptions~\ref{as:exchangeability} and \ref{as:symmetry_bis}, the prediction set output by Algorithm~\ref{alg:full_conformal} (and so Algorithm~\ref{alg:split_conformal_bis}) is valid even if $K=1$ deterministically.
\end{thm}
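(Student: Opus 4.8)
The plan is to follow the proof of Lemma~\ref{lem:full_validity} verbatim, except that the role played there by the independent uniform draw of the query index $K$ is now played by the distributional symmetry guaranteed by Assumptions~\ref{as:exchangeability} and~\ref{as:symmetry_bis}. As in that proof, coverage can only fail at the iteration of the loop in Algorithm~\ref{alg:full_conformal} corresponding to $y = Y_{\nu_1}$, the true label: only then could $Y_{\nu_1}$ be removed from $\hat C$. At that iteration $Y^+$ equals the true label sequence $Y = (Y_{\nu_1}, \ldots, Y_{\nu_{|\nu|}})$ and the non-conformity scores are $(R_1, \ldots, R_m) = \*A(G, X, Y)$. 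Hence it suffices to show that the event $E := \{R_1 \text{ is among the } \lfloor \alpha(m) \rfloor \text{ largest of } R_1, \ldots, R_m\}$ satisfies $\Prob(E) \le \alpha$, since $E$ occurs if and only if $Y_{\nu_1} \notin \hat C$.

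The heart of the argument is that $(R_1, \ldots, R_m)$ is an exchangeable random vector. Fix a permutation $\sigma$ of $[m]$ and let $\pi : \nu \to \nu$ denote its extension that fixes every index $> m$. Assumption~\ref{as:exchangeability} gives $(\pi G, \pi X, \pi Y) \triangleq (G, X, Y)$; applying the (data-independent, measurable) map $\*A$ to both sides and then using the equivariance in Assumption~\ref{as:symmetry_bis} yields
\[
\pi\, \*A(G, X, Y) \;=\; \*A(\pi G, \pi X, \pi Y) \;\triangleq\; \*A(G, X, Y) \;=\; (R_1, \ldots, R_m).
\]
Restricting to the first $m$ coordinates, $(R_{\sigma^{-1}(1)}, \ldots, R_{\sigma^{-1}(m)}) \triangleq (R_1, \ldots, R_m)$; since $\sigma$ was arbitrary, $(R_1, \ldots, R_m)$ is exchangeable. (If $\*A$ is randomized, its internal randomness is taken independent of the data, so this equality in distribution is unaffected.)

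To finish I would run the averaging argument of Lemma~\ref{lem:full_validity}, with exchangeability supplying the needed symmetry in place of the uniform $K$. Resolving ``among the $k$ largest'' by any fixed deterministic tie-break, exactly $\min(k, m)$ of the indices $1, \ldots, m$ qualify at a given realization, where $k = \lfloor \alpha(m) \rfloor \le \alpha m < m$. Exchangeability gives $\Prob(R_j \text{ among top } k) = \Prob(R_1 \text{ among top } k)$ for every $j \le m$, so $\Prob(E) = \frac{1}{m} \sum_{j=1}^m \Prob(R_j \text{ among top } k) = \frac{1}{m}\, \mathbb{E}\bigl[\#\{j \le m : R_j \text{ among top } k\}\bigr] = \frac{k}{m} \le \alpha$. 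Therefore $\Prob(Y_{\nu_1} \in \hat C) = 1 - \Prob(E) \ge 1 - \alpha$. Algorithm~\ref{alg:split_conformal_bis} is the special case in which $\*A$ is assembled from a split train/calibrate routine, so it inherits the same guarantee.

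The main obstacle is not conceptual but bookkeeping: one must keep straight the three distinct ``permutations'' in play — $\sigma$ on $[m]$, its extension $\pi$ acting on $\nu$ and hence on the columns of $G$ and on the sequences $X, Y$ as defined just before Assumption~\ref{as:symmetry}, and $\pi$ acting on the output sequence of $\*A$ — and verify that Assumption~\ref{as:symmetry_bis} is only ever invoked for permutations fixing indices $> m$, which is precisely the class for which Assumption~\ref{as:exchangeability} asserts invariance. A lesser point needing care is the tie-handling convention in the ranking step, so that the count of qualifying indices is exactly $\min(k,m)$ and the final inequality $k/m \le \alpha$ holds cleanly.
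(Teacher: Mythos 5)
Your proposal is correct and follows essentially the same route as the paper's proof: establish exchangeability of the scores $R_1,\ldots,R_m$ by combining the data exchangeability (Assumption~\ref{as:exchangeability}) with the equivariance of $\*A$ (Assumption~\ref{as:symmetry_bis}), deduce $\Prob(E)\leq\alpha$, and note $E$ occurs iff $Y_{\nu_1}\notin\hat C$. You in fact fill in details the paper leaves implicit (the explicit counting argument giving $\Prob(E)=\lfloor\alpha m\rfloor/m$ and the tie-breaking caveat), so no changes are needed.
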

\begin{proof}
The combination of assumptions~\ref{as:exchangeability} and \ref{as:symmetry_bis} ensures that $R_1, \ldots, R_{m}$ are exchangeable, and therefore the event
  \[E = \text{``$R_1$ among $\lfloor \alpha (m) \rfloor$ largest of $R_1, \ldots, R_{m}$''},\]
  occurs with probability
  \[\Prob(E) \leq \alpha.\]
  But $E$ occurs if and only if $Y_{\nu_{1}} \not\in \hat C$. Therefore,
  \[\Prob(Y_{\nu_{1}} \in \hat C) = 1 - \Prob(Y_{\nu_{K}} \not\in \hat C) = 1-\Prob(E) \geq 1-\alpha.\]
\end{proof}


To map Theorem~\ref{thm:inductive_bis} to Theorem~\ref{thm:inductive}, we observe that Assumptions~\ref{as:exchangeability} and \ref{as:exchangeability} are the same (the latter a detailed version of the former); and that 
inputting $\mathbf{A}^{\text{UNF}}$ to a GNN satisfying Assumption~\ref{as:symmetry}, along with the attributes $X_{w}, w \in \nu$ and training labels $Y_{w}, w \in \nu^{(\text{training})}$, results in an $\*A$ satisfying Assumption~\ref{as:symmetry_bis}.

\section{Temporal Analysis using GCN}\label{app:gcn_school_plots}

In this section, we present a temporal analysis of the school data example using GCN instead of GAT as the GNN model. We see similar conclusions to those stated in Section \ref{sec:experiments}. The accuracy of UGCN is slightly higher at every time point in the transductive case and much higher in the semi-inductive case. Block GCN is essentially guessing randomly in the semi-inductive case. The prediction sets for UGCN are always smaller than those of block GCN, with UGCN's set sizes also reacting to the difficulty problem in the semi-inductive regime. However, neither method achieves valid coverage of every time point in the transductive regime, in contrast to GAT. UGCN also under-covers the first test point in the semi-inductive case (unlike UGAT), while block GCN continually under-covers here.

\begin{figure}[ht]
    \centering
    \begin{subfigure}[b]{0.45\textwidth}
        \includegraphics[width=\textwidth]{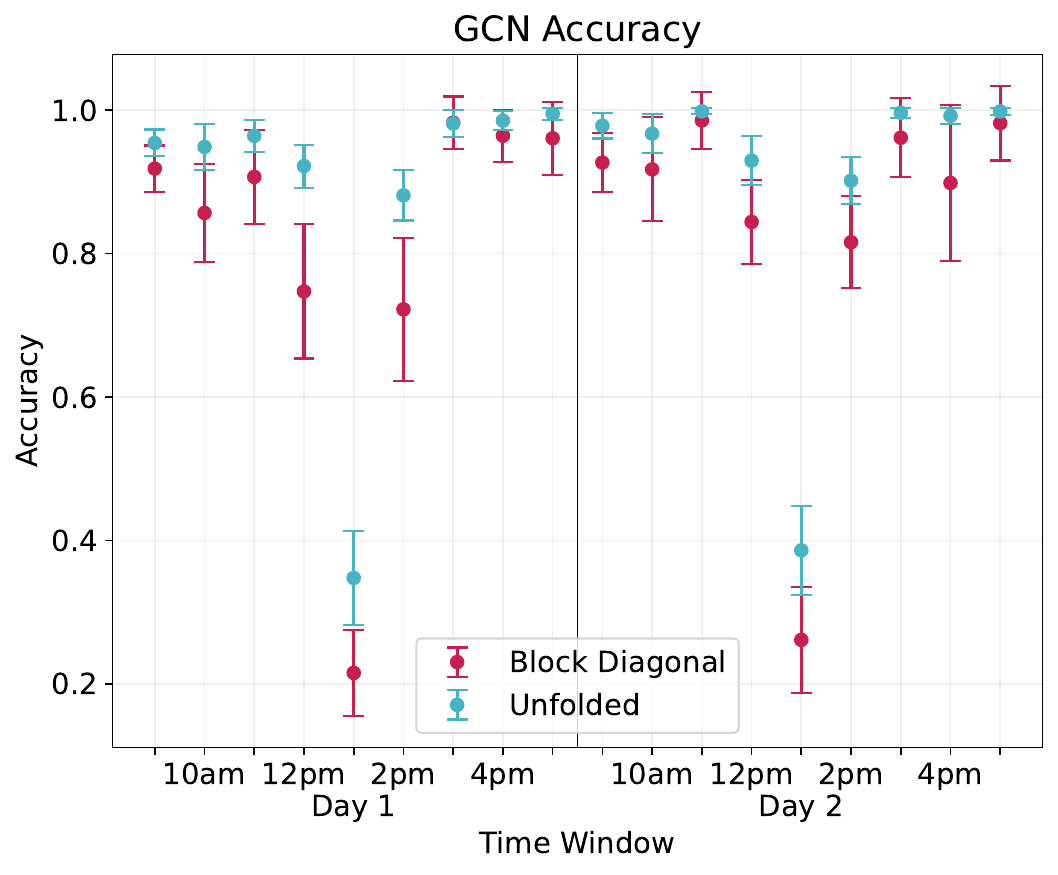}
        \caption{Transductive.}
        \label{gcn_figs:gcn_school_acc_trans}
    \end{subfigure}
    \hfill
    \begin{subfigure}[b]{0.45\textwidth}
        \includegraphics[width=\textwidth]{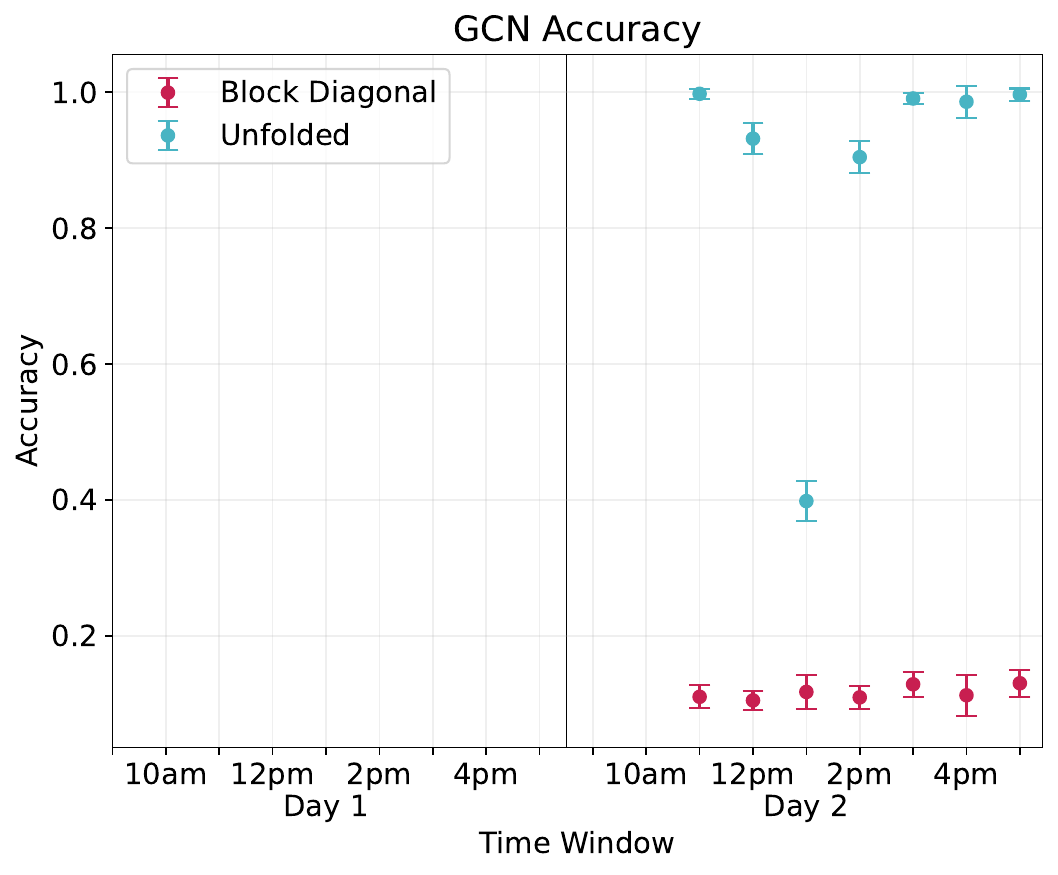}
        \caption{Semi-inductive.}
        \label{gcn_figs:gcn_school_acc_ind}
    \end{subfigure}
    \caption{Prediction accuracy for each time window of the school dataset for unfolded GCN and block diagonal GCN. The prediction task gets more difficult at lunchtime, as shown by the drop in accuracy of both methods in the transductive case. UGCN has marginally better performance in the transductive case and significantly better performance in the semi-inductive case.}
    \label{gcn_figs:school_acc}
\end{figure}

\begin{figure}[ht]
    \centering
    \begin{subfigure}[b]{0.45\textwidth}
        \includegraphics[width=\textwidth]{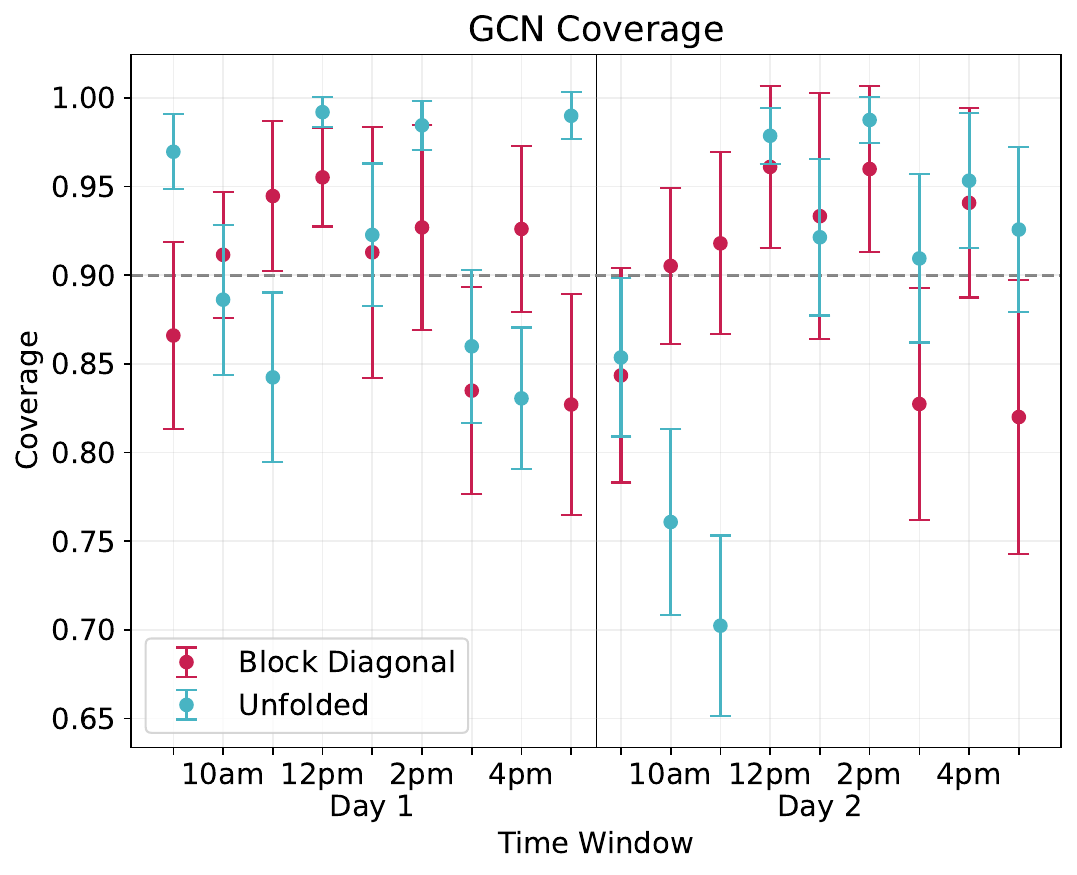}
        \caption{Transductive.}
        \label{gcn_figs:gcn_school_coverage_trans}
    \end{subfigure}
    \hfill
    \begin{subfigure}[b]{0.45\textwidth}
        \includegraphics[width=\textwidth]{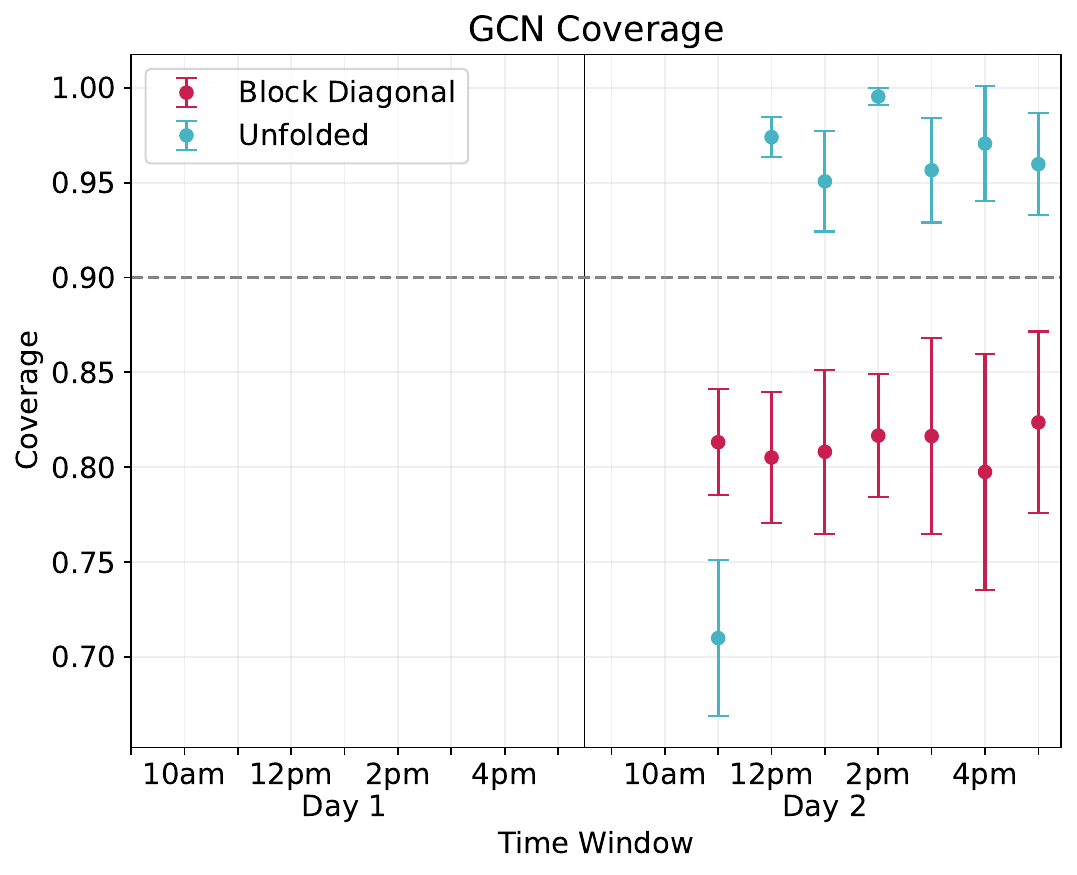}
        \caption{Semi-inductive.}
        \label{gcn_figs:gcn_school_coverage_ind}
    \end{subfigure}
    \caption{Coverage for each time window of the school dataset for unfolded GCN and block diagonal GCN. Both methods maintain target coverage on average in the transductive case, but not at every point in time. In the semi-inductive case, block GCN under-covers continuously and UGCN under-covers for the first test point.}
    \label{gcn_figs:school_coverage}
\end{figure}

\begin{figure}[ht]
    \centering
    \begin{subfigure}[b]{0.45\textwidth}
        \includegraphics[width=\textwidth]{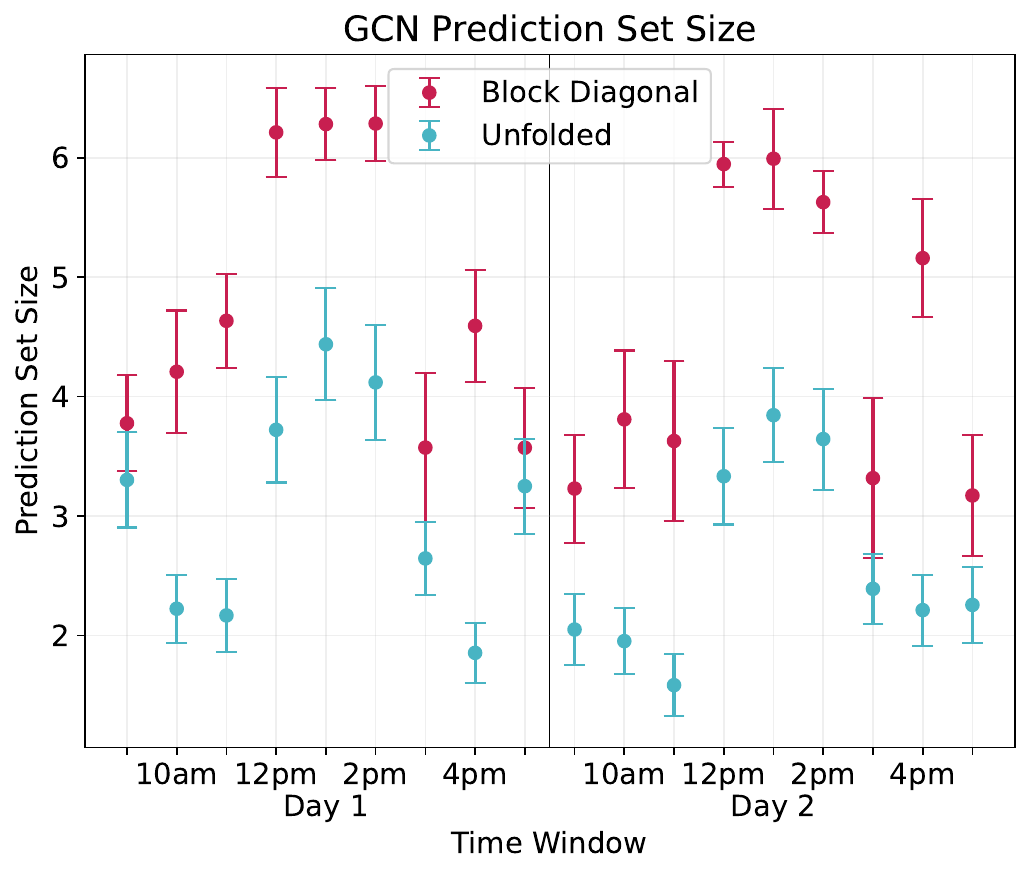}
        \caption{Transductive.}
        \label{gcn_figs:gcn_school_sets_trans}
    \end{subfigure}
    \hfill
    \begin{subfigure}[b]{0.45\textwidth}
        \includegraphics[width=\textwidth]{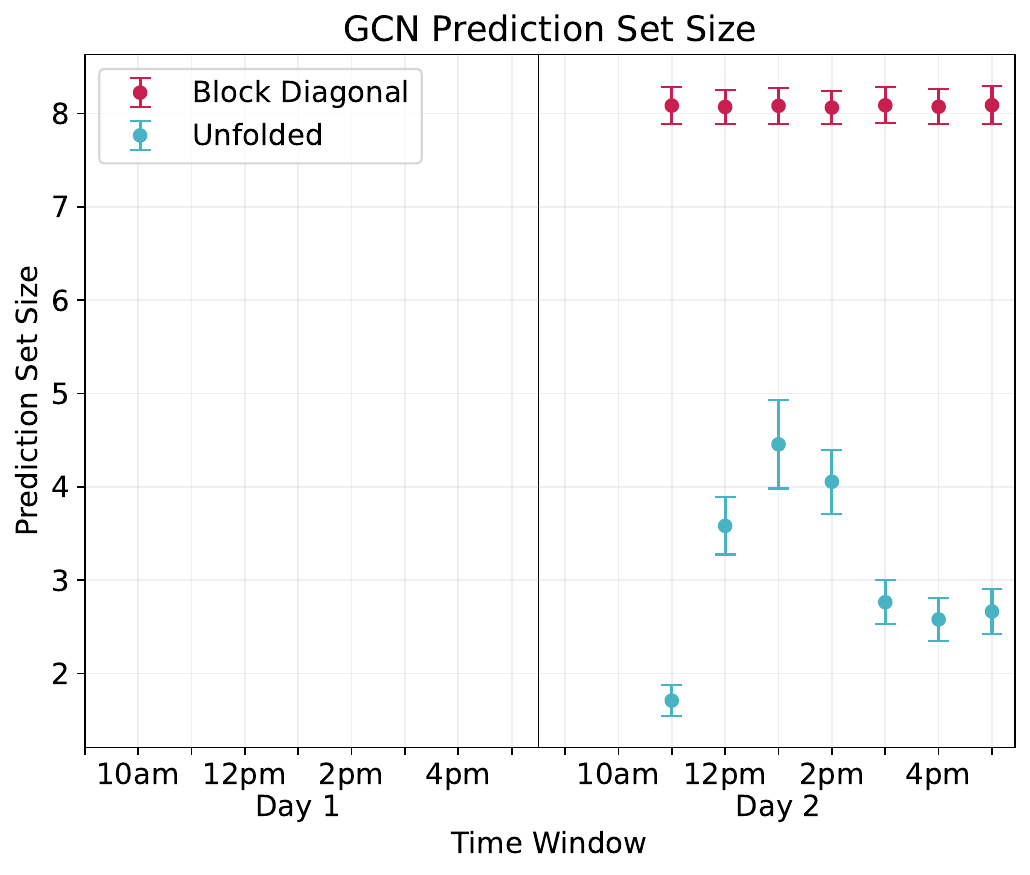}
        \caption{Semi-inductive.}
        \label{gcn_figs:gcn_school_sets_ind}
    \end{subfigure}
    \caption{Prediction set sizes for each time window of the school dataset for unfolded GCN and block diagonal GCN. As the prediction task gets more difficult at lunchtime the prediction set sizes increase. Only the UGCN set sizes react to lunchtime in the semi-inductive case.}
    \label{gcn_figs:school_sets}
\end{figure}

\FloatBarrier

\section{Time-conditional Coverage}\label{app:time_coverage}

When evaluating a conformal procedure, it is common to consider conditional coverage \citep{angelopoulos2021gentle, huang2024uncertainty, lei2014distribution}. A procedure with conditional coverage would be very adaptable, as it would return prediction sets with coverage $1- \alpha$ conditional on the test point. While this property is desirable, it is known to be impossible to achieve exact distribution-free conditional coverage \citep{lei2014distribution}. Despite this, it is common to evaluate approximate conditional coverage by looking at the worst-case coverage across certain features or groups \citep{angelopoulos2021gentle}. For the purposes of this paper, we evaluate the \textit{time-conditional} coverage, in other words, the worst-case coverage for a single time point out of all time points in the series.

Table \ref{tab:time_covg} displays the time-conditional coverage for each method. While UGAT is often valid, it is not generally true that either block GNN or UGNN achieves validity here. The reason for this is again rooted in data drift. As no two time points are exactly exchangeable, no two \textit{embedded} time points will be exactly exchangeable. This then translates to variation in coverage between time points. In the marginal coverage case, variation is fine as long as the average coverage is on target. However, under this metric, any variation causes a decrease.

Despite this, let us consider an example with no drift between time points. Consider another three-community DSBM with inter-community edge probability matrix
\begin{equation*}
    \mathbf{B} = \begin{bmatrix}
 0.16& 0.02& 0.02\\  
 0.02& 0.08& 0.02\\  
 0.02& 0.02& 0.16\\ \end{bmatrix},
\end{equation*}
and draw each $\mathbf{A}^{(t)}$ from $\mathbf{B}$. This means that the series is simply a list of i.i.d. draws and so features no data drift. Table \ref{tab:iid_sbm_TSC} displays the result of running our experiments on this data. As expected, each method achieved value time-conditional coverage in the transductive and temporal transductive regimes as drift between time points is no longer a problem. However, even with such a basic data example, it is only the UGNNs that are able to achieve valid temporal-conditional coverage here. 

\begin{table}[ht]
\centering

\begin{subtable}{\textwidth}
\centering
\begin{tabular}{|c|c|c|c|c|}
\hline
Methods & \multicolumn{2}{c|}{SBM} & \multicolumn{2}{c|}{School}  \\
\cline{2-5}
& Trans. & Semi-ind. & Trans. & Semi-ind. \\
\hline
Block GCN & 0.783 $\pm$ 0.091 & 0.645 $\pm$ 0.055 & 0.820 $\pm$ 0.077 & 0.797 $\pm$ 0.062 \\
UGCN & 0.806 $\pm$ 0.044 & 0.869 $\pm$ 0.030 &  0.702 $\pm$ 0.051 & $0.710 \pm 0.041$  \\
\hline
Block GAT & \textbf{0.856 $\pm$ 0.068} & 0.450 $\pm$ 0.154  & 0.763 $\pm$ 0.114 & 0.656 $\pm$ 0.083 \\
UGAT & \textbf{0.877 $\pm$ 0.040} & \textbf{0.907 $\pm$ 0.031} & 0.828 $\pm$ 0.058 & \textbf{0.870 $\pm$ 0.056}  \\
\hline
\end{tabular}
\end{subtable}

\vspace*{0.1 cm}

\begin{subtable}{\textwidth}
\centering
\begin{tabular}{|c|c|c|c|c|}
\hline
Methods & \multicolumn{2}{c|}{Flight} & \multicolumn{2}{c|}{Trade}  \\
\cline{2-5}
& Trans. & Semi-ind. & Trans. & Semi-ind. \\
\hline
Block GCN & 0.881 $\pm$ 0.011 & 0.852 $\pm$ 0.013 & 0.801 $\pm$ 0.064 & 0.767 $\pm$ 0.044 \\
UGCN & 0.877 $\pm$ 0.011 & 0.894 $\pm$ 0.004 & 0.800 $\pm$ 0.094 & 0.774 $\pm$ 0.056  \\
\hline
Block GAT & 0.885 $\pm$ 0.011 & 0.861 $\pm$ 0.014 & 0.822 $\pm$ 0.061 & 0.776 $\pm$ 0.046 \\
UGAT & \textbf{0.889 $\pm$ 0.009} & \textbf{0.896 $\pm$ 0.004} & 0.784 $\pm$ 0.076 & 0.792 $\pm$ 0.065 \\
\hline
\end{tabular}
\end{subtable}

\caption{Time-conditional coverage over time with target $\geq 0.90$. Values in bold indicate valid coverage.}
\label{tab:time_covg}

\end{table}

\begin{table}[h]
\centering
\begin{tabular}{|c|c|c|c|}
\hline
Methods & \multicolumn{3}{c|}{SBM (i.i.d.)} \\ \cline{2-4} & Trans. & Temp. Trans. & Semi-Ind.  \\ \hline
Block GCN & \textbf{0.875 $\pm$ 0.063} & \textbf{0.982 $\pm$ 0.018} & $0.653 \pm 0.072$ \\ \hline
UGCN & \textbf{0.880 $\pm$ 0.031} & \textbf{0.881 $\pm$ 0.048} & \textbf{0.891 $\pm$ 0.010} \\ \hline
Block GAT & \textbf{0.862 $\pm$ 0.079} & \textbf{0.976 $\pm$ 0.022} & $0.449 \pm 0.158$ \\ \hline
UGAT & \textbf{0.883 $\pm$ 0.031} & \textbf{0.887 $\pm$ 0.045} & \textbf{0.885 $\pm$ 0.016} \\ \hline
\end{tabular}
\caption{Time-conditional coverage on an i.i.d. version of the SBM experiment. Bold values indicate validity with target $\geq$ 0.90.}
\label{tab:iid_sbm_TSC}
\end{table}



\FloatBarrier

\section{Temporal Transductive}\label{app:temporal_transductive}

As mentioned in Section \ref{sec:t_and_m}, and visualised in Figure \ref{fig:contribution_overview}b (bottom), the temporal transductive regime represents the case where the train and validation sets are selected randomly from data at times $t < T$, and calibration and test sets are selected randomly from $t \geq T$. This regime exists as a more practical version of the standard transductive regime, despite, theoretically, requiring no extra assumptions for valid conformal sets. However, this regime presents more of a challenge in comparison to the standard transductive case as we require a model that is trained on $t < T$ to generalise to $t \geq T$.

Table \ref{tab:temporal_transductive_experiments} displays the results of running our experiments in this regime. As predicted by our theory, and similarly to the transductive experiments, every method (including the block diagonal methods) achieves valid coverage here. However, as only the unfolded methods are exchangeable over time, the UGNN methods have a significantly higher accuracy on the datasets with out drift. Due to this, the average prediction set size is also reduced for the UGNNs.

\begin{table}[ht]
\centering

\begin{subtable}{\textwidth}
\centering
\begin{tabular}{|l|l|l|l|l|}
\hline
Methods & SBM & School & Flight & Trade \\ \hline
Block GCN & 0.318 $\pm$ 0.036& 0.118 $\pm$ 0.025& 0.103 $\pm$ 0.063& \textbf{0.052 $\pm$ 0.014}\\
UGCN & \textbf{0.984 $\pm$ 0.010}& \textbf{0.925 $\pm$ 0.020}& \textbf{0.464 $\pm$ 0.075}& 0.043 $\pm$ 0.021\\
\hline
Block GAT & 0.355 $\pm$ 0.046& 0.104 $\pm$ 0.032& 0.106 $\pm$ 0.064& 0.049 $\pm$ 0.015\\
UGAT & \textbf{0.980 $\pm$ 0.012}& \textbf{0.903 $\pm$ 0.028}& \textbf{0.420 $\pm$ 0.064}& \textbf{0.051 $\pm$ 0.016}\\
\hline
\end{tabular}
\caption{Accuracy. Bold values indicate higher accuracy.}
\label{tab:temporal_transductive_Accuracy}
\end{subtable}

\begin{subtable}{\textwidth}
\centering
\begin{tabular}{|l|l|l|l|l|}
\hline
Methods & SBM & School & Flight & Trade \\ \hline
Block GCN & \textbf{0.985 $\pm$ 0.011}& \textbf{0.911 $\pm$ 0.026}& \textbf{0.900 $\pm$ 0.005}& \textbf{0.901 $\pm$ 0.014}\\
UGCN & \textbf{0.904 $\pm$ 0.028}& \textbf{0.901 $\pm$ 0.027}& \textbf{0.900 $\pm$ 0.005}& \textbf{0.901 $\pm$ 0.015}\\
\hline
Block GAT & \textbf{0.977 $\pm$ 0.014}& \textbf{0.931 $\pm$ 0.033}& \textbf{0.900 $\pm$ 0.005}& \textbf{0.901 $\pm$ 0.015}\\
UGAT & \textbf{0.905 $\pm$ 0.027}& \textbf{0.902 $\pm$ 0.026}& \textbf{0.900 $\pm$ 0.005}& \textbf{0.901 $\pm$ 0.015}\\
\hline
\end{tabular}
\caption{Coverage. Bold values indicate validity with target $\geq$ 0.90.}
\label{tab:temporal_transductive_Coverage}
\end{subtable}

\begin{subtable}{\textwidth}
\centering
\begin{tabular}{|l|l|l|l|l|}
\hline
Methods & SBM & School & Flight & Trade \\ \hline
Block GCN & 2.952 $\pm$ 0.018& 9.069 $\pm$ 0.186& 30.075 $\pm$ 2.783& \textbf{111.315 $\pm$ 10.401}\\
UGCN & \textbf{1.170 $\pm$ 0.199}& \textbf{3.718 $\pm$ 0.259}& \textbf{21.466 $\pm$ 2.226}& 112.427 $\pm$ 8.300\\
\hline
Block GAT & 2.933 $\pm$ 0.031& 9.153 $\pm$ 0.322& 30.842 $\pm$ 2.519& 115.805 $\pm$ 8.289\\
UGAT & \textbf{0.933 $\pm$ 0.036}& \textbf{4.488 $\pm$ 0.698}& \textbf{23.626 $\pm$ 2.399}& \textbf{111.892 $\pm$ 7.711}\\
\hline
\end{tabular}
\caption{Average set size. Bold values indicate smaller set sizes.}
\label{tab:temporal_transductive_Avg_Size}
\end{subtable}

\begin{subtable}{\textwidth}
\centering
\begin{tabular}{|l|l|l|l|l|}
\hline
Methods & SBM & School & Flight & Trade \\ \hline
Block GCN & \textbf{0.985 $\pm$ 0.017}& \textbf{0.898 $\pm$ 0.072}& \textbf{0.899 $\pm$ 0.017}& \textbf{0.858 $\pm$ 0.055}\\
UGCN & \textbf{0.858 $\pm$ 0.080}& 0.654 $\pm$ 0.104& \textbf{0.884 $\pm$ 0.017}& 0.842 $\pm$ 0.049\\
\hline
Block GAT & \textbf{0.975 $\pm$ 0.024}& \textbf{0.914 $\pm$ 0.083}& \textbf{0.898 $\pm$ 0.017}& \textbf{0.850 $\pm$ 0.055}\\
UGAT & \textbf{0.870 $\pm$ 0.058}& \textbf{0.832 $\pm$ 0.084}& \textbf{0.891 $\pm$ 0.017}& \textbf{0.844 $\pm$ 0.068}\\
\hline
\end{tabular}
\label{tab:temporal_transductive_TSC}
\caption{Time-conditional coverage. Bold values indicate validity with target $\geq$ 0.90.}
\end{subtable}

\caption{Experiments run in the temporal transductive regime.}
\label{tab:temporal_transductive_experiments}

\end{table}

\FloatBarrier

\section{Repeat Experiments with Different Data Splits}\label{app:data_splits}

Throughout this work, we have used a standard 20/10/35/35 train/validation/calibration/testing data split. To explore the sensitivity of our results to data split selection, we carry out a repeat of our experiments using six different data split ratios. For the sake of computational efficiency, we repeat these experiments using only the school dataset. 

Across the different data splits, we almost always achieve the same results as those in the main text. While there is one case of UGCN just missing out on valid coverage, and a few cases of block GCN achieving valid coverage in the semi-inductive regime, we never see block GNN achieving valid coverage when UGNN does not. Therefore, UGNN appears to be the favoured method, regardless of data split choice.

\begin{table}[ht]
\centering

\begin{subtable}{\textwidth}
\centering
\begin{tabular}{|l|l|l|l|}
\hline
Embedding & \multicolumn{3}{c|}{School} \\ 
\cline{2-4}
& Trans. & Semi-ind. & Temp. Trans. \\ 
\hline
Block GCN & \textbf{0.901 $\pm$ 0.014} & 0.833 $\pm$ 0.046 & \textbf{0.909 $\pm$ 0.019} \\ \hline
UGCN & \textbf{0.901 $\pm$ 0.014} & \textbf{0.943 $\pm$ 0.018} & \textbf{0.903 $\pm$ 0.014} \\ \hline
Block GAT & \textbf{0.902 $\pm$ 0.014} & 0.681 $\pm$ 0.102 & \textbf{0.916 $\pm$ 0.026} \\ \hline
UGAT & \textbf{0.901 $\pm$ 0.014} & \textbf{0.912 $\pm$ 0.024} & \textbf{0.901 $\pm$ 0.014} \\ \hline
\end{tabular}
\caption{Coverage for the School experiment with data split 25/25/25/25.}
\end{subtable}

\begin{subtable}{\textwidth}
\centering
\begin{tabular}{|l|l|l|l|}
\hline
Embedding & \multicolumn{3}{c|}{School} \\ 
\cline{2-4}
& Trans. & Semi-ind. & Temp. Trans. \\ 
\hline
Block GCN & \textbf{0.903 $\pm$ 0.015} & \textbf{0.826 $\pm$ 0.075} & \textbf{0.913 $\pm$ 0.020} \\ \hline
UGCN & \textbf{0.902 $\pm$ 0.016} & \textbf{0.939 $\pm$ 0.020} & \textbf{0.902 $\pm$ 0.016} \\ \hline
Block GAT & \textbf{0.904 $\pm$ 0.015} & 0.692 $\pm$ 0.109 & \textbf{0.926 $\pm$ 0.029} \\ \hline
UGAT & \textbf{0.903 $\pm$ 0.016} & \textbf{0.901 $\pm$ 0.035} & \textbf{0.902 $\pm$ 0.015} \\ \hline
\end{tabular}
\caption{Coverage for the School experiment with data split 50/10/20/20.}
\end{subtable}

\begin{subtable}{\textwidth}
\centering
\begin{tabular}{|l|l|l|l|}
\hline
Embedding & \multicolumn{3}{c|}{School} \\ 
\cline{2-4}
& Trans. & Semi-ind. & Temp. Trans. \\ 
\hline
Block GCN & \textbf{0.901 $\pm$ 0.011} & 0.822 $\pm$ 0.055 & \textbf{0.905 $\pm$ 0.016} \\ \hline
UGCN & \textbf{0.900 $\pm$ 0.011} & 0.867 $\pm$ 0.022 & \textbf{0.899 $\pm$ 0.011} \\ \hline
Block GAT & \textbf{0.901 $\pm$ 0.011} & 0.628 $\pm$ 0.099 & \textbf{0.909 $\pm$ 0.017} \\ \hline
UGAT & \textbf{0.901 $\pm$ 0.011} & \textbf{0.909 $\pm$ 0.021} & \textbf{0.901 $\pm$ 0.011} \\ \hline
\end{tabular}
\caption{Coverage for the School experiment with data split 10/10/40/40.}
\end{subtable}

\begin{subtable}{\textwidth}
\centering
\begin{tabular}{|l|l|l|l|}
\hline
Embedding & \multicolumn{3}{c|}{School} \\ 
\cline{2-4}
& Trans. & Semi-ind. & Temp. Trans. \\ 
\hline
Block GCN & \textbf{0.901 $\pm$ 0.012} & 0.823 $\pm$ 0.049 & \textbf{0.906 $\pm$ 0.017} \\ \hline
UGCN & \textbf{0.901 $\pm$ 0.012} & \textbf{0.889 $\pm$ 0.020} & \textbf{0.903 $\pm$ 0.012} \\ \hline
Block GAT & \textbf{0.901 $\pm$ 0.012} & 0.654 $\pm$ 0.093 & \textbf{0.914 $\pm$ 0.022} \\ \hline
UGAT & \textbf{0.901 $\pm$ 0.012} & \textbf{0.907 $\pm$ 0.024} & \textbf{0.901 $\pm$ 0.012} \\ \hline
\end{tabular}
\caption{Coverage for the School experiment with data split 20/10/35/35.}
\end{subtable}

\begin{subtable}{\textwidth}
\centering
\begin{tabular}{|l|l|l|l|}
\hline
Embedding & \multicolumn{3}{c|}{School} \\ 
\cline{2-4}
& Trans. & Semi-ind. & Temp. Trans. \\ 
\hline
Block GCN & \textbf{0.906 $\pm$ 0.022} & \textbf{0.845 $\pm$ 0.070} & \textbf{0.918 $\pm$ 0.031} \\ \hline
UGCN & \textbf{0.905 $\pm$ 0.022} & \textbf{0.903 $\pm$ 0.030} & \textbf{0.906 $\pm$ 0.026} \\ \hline
Block GAT & \textbf{0.906 $\pm$ 0.022} & 0.705 $\pm$ 0.115 & \textbf{0.946 $\pm$ 0.046} \\ \hline
UGAT & \textbf{0.906 $\pm$ 0.022} & \textbf{0.876 $\pm$ 0.042} & \textbf{0.907 $\pm$ 0.026} \\ \hline
\end{tabular}
\caption{Coverage for the School experiment with data split 50/30/10/10.}
\end{subtable}

\begin{subtable}{\textwidth}
\centering
\begin{tabular}{|l|l|l|l|}
\hline
Embedding & \multicolumn{3}{c|}{School} \\ 
\cline{2-4}
& Trans. & Semi-ind. & Temp. Trans. \\ 
\hline
Block GCN & \textbf{0.901 $\pm$ 0.013} & \textbf{0.870 $\pm$ 0.039} & \textbf{0.910 $\pm$ 0.017} \\ \hline
UGCN & \textbf{0.901 $\pm$ 0.013} & \textbf{0.897 $\pm$ 0.020} & \textbf{0.902 $\pm$ 0.014} \\ \hline
Block GAT & \textbf{0.901 $\pm$ 0.013} & 0.786 $\pm$ 0.069 & \textbf{0.908 $\pm$ 0.019} \\ \hline
UGAT & \textbf{0.902 $\pm$ 0.013} & \textbf{0.908 $\pm$ 0.023} & \textbf{0.901 $\pm$ 0.013} \\ \hline
\end{tabular}
\caption{Coverage for the School experiment with data split 5/35/30/30.}
\end{subtable}

\caption{Coverage for repeated runs of the School experiment with various data splits. Bold values indicate valid coverage with target $\geq$ 0.9.}
\end{table}

\begin{table}[ht]
\centering

\begin{subtable}{\textwidth}
\centering
\begin{tabular}{|l|l|l|l|}
\hline
Embedding & \multicolumn{3}{c|}{School} \\ 
\cline{2-4}
& Trans. & Semi-ind. & Temp. Trans. \\ 
\hline
Block GCN & 0.888 $\pm$ 0.013 & 0.109 $\pm$ 0.025 & 0.112 $\pm$ 0.010 \\ \hline
UGCN & \textbf{0.935 $\pm$ 0.007} & \textbf{0.896 $\pm$ 0.010} & \textbf{0.938 $\pm$ 0.006} \\ \hline
Block GAT & 0.834 $\pm$ 0.021 & 0.108 $\pm$ 0.028 & 0.106 $\pm$ 0.018 \\ \hline
UGAT & \textbf{0.908 $\pm$ 0.019} & \textbf{0.883 $\pm$ 0.014} & \textbf{0.904 $\pm$ 0.024} \\ \hline
\end{tabular}
\caption{Accuracy for the School experiment with data split 25/25/25/25.}
\end{subtable}

\begin{subtable}{\textwidth}
\centering
\begin{tabular}{|l|l|l|l|}
\hline
Embedding & \multicolumn{3}{c|}{School} \\ 
\cline{2-4}
& Trans. & Semi-ind. & Temp. Trans. \\ 
\hline
Block GCN & 0.911 $\pm$ 0.010 & 0.108 $\pm$ 0.035 & 0.118 $\pm$ 0.015 \\ \hline
UGCN & \textbf{0.931 $\pm$ 0.010} & \textbf{0.971 $\pm$ 0.008} & \textbf{0.923 $\pm$ 0.009} \\ \hline
Block GAT & 0.880 $\pm$ 0.016 & 0.111 $\pm$ 0.034 & 0.095 $\pm$ 0.032 \\ \hline
UGAT & \textbf{0.912 $\pm$ 0.017} & \textbf{0.962 $\pm$ 0.010} & \textbf{0.908 $\pm$ 0.014} \\ \hline
\end{tabular}
\caption{Accuracy for the School experiment with data split 50/10/20/20.}
\end{subtable}

\begin{subtable}{\textwidth}
\centering
\begin{tabular}{|l|l|l|l|}
\hline
Embedding & \multicolumn{3}{c|}{School} \\ 
\cline{2-4}
& Trans. & Semi-ind. & Temp. Trans. \\ 
\hline
Block GCN & 0.787 $\pm$ 0.020 & 0.105 $\pm$ 0.019 & 0.110 $\pm$ 0.010 \\ \hline
UGCN & \textbf{0.928 $\pm$ 0.009} & \textbf{0.913 $\pm$ 0.014} & \textbf{0.924 $\pm$ 0.006} \\ \hline
Block GAT & 0.697 $\pm$ 0.031 & 0.102 $\pm$ 0.023 & 0.108 $\pm$ 0.015 \\ \hline
UGAT & \textbf{0.880 $\pm$ 0.019} & \textbf{0.903 $\pm$ 0.016} & \textbf{0.876 $\pm$ 0.014} \\ \hline
\end{tabular}
\caption{Accuracy for the School experiment with data split 10/10/40/40.}
\end{subtable}

\begin{subtable}{\textwidth}
\centering
\begin{tabular}{|l|l|l|l|}
\hline
Embedding & \multicolumn{3}{c|}{School} \\ 
\cline{2-4}
& Trans. & Semi-ind. & Temp. Trans. \\ 
\hline
Block GCN & 0.871 $\pm$ 0.010 & 0.105 $\pm$ 0.024 & 0.113 $\pm$ 0.009 \\ \hline
UGCN & \textbf{0.933 $\pm$ 0.006} & \textbf{0.913 $\pm$ 0.009} & \textbf{0.953 $\pm$ 0.006} \\ \hline
Block GAT & 0.808 $\pm$ 0.018 & 0.107 $\pm$ 0.025 & 0.109 $\pm$ 0.010 \\ \hline
UGAT & \textbf{0.901 $\pm$ 0.021} & \textbf{0.899 $\pm$ 0.012} & \textbf{0.910 $\pm$ 0.014} \\ \hline
\end{tabular}
\caption{Accuracy for the School experiment with data split 20/10/35/35.}
\end{subtable}

\begin{subtable}{\textwidth}
\centering
\begin{tabular}{|l|l|l|l|}
\hline
Embedding & \multicolumn{3}{c|}{School} \\ 
\cline{2-4}
& Trans. & Semi-ind. & Temp. Trans. \\ 
\hline
Block GCN & 0.910 $\pm$ 0.016 & 0.113 $\pm$ 0.038 & 0.135 $\pm$ 0.026 \\ \hline
UGCN & \textbf{0.934 $\pm$ 0.014} & \textbf{0.996 $\pm$ 0.006} & \textbf{0.998 $\pm$ 0.002} \\ \hline
Block GAT & 0.881 $\pm$ 0.017 & 0.116 $\pm$ 0.035 & 0.114 $\pm$ 0.041 \\ \hline
UGAT & \textbf{0.913 $\pm$ 0.018} & \textbf{0.991 $\pm$ 0.007} & \textbf{0.979 $\pm$ 0.013} \\ \hline
\end{tabular}
\caption{Accuracy for the School experiment with data split 50/30/10/10.}
\end{subtable}

\begin{subtable}{\textwidth}
\centering
\begin{tabular}{|l|l|l|l|}
\hline
Embedding & \multicolumn{3}{c|}{School} \\ 
\cline{2-4}
& Trans. & Semi-ind. & Temp. Trans. \\ 
\hline
Block GCN & 0.583 $\pm$ 0.031 & 0.105 $\pm$ 0.022 & 0.105 $\pm$ 0.011 \\ \hline
UGCN & \textbf{0.918 $\pm$ 0.026} & \textbf{0.897 $\pm$ 0.022} & \textbf{0.939 $\pm$ 0.009} \\ \hline
Block GAT & 0.543 $\pm$ 0.030 & 0.106 $\pm$ 0.025 & 0.111 $\pm$ 0.010 \\ \hline
UGAT & \textbf{0.838 $\pm$ 0.031} & \textbf{0.874 $\pm$ 0.022} & \textbf{0.877 $\pm$ 0.032} \\ \hline
\end{tabular}
\caption{Accuracy for the School experiment with data split 5/35/30/30.}
\end{subtable}

\caption{Accuracy values (higher is better) for repeated runs of the School experiment with various data splits. Bold values indicate the highest accuracy for a given GNN/representation pair.}
\end{table}

\FloatBarrier

\section{Repeat Experiments using Different Score Functions}\label{app:score_functions}

Up to now, we have used adaptive prediction sets (APS) \citep{romano2020classification} to get our prediction sets. However, our theoretical results are not limited to a single choice of conformal procedure. To highlight this, we include experiments where we use the APS, RAPS \citep{angelopoulos2020uncertainty} and SAPS \citep{huang2023conformal} methods to compute prediction sets. 

As both RAPS and SAPS require the choice of a hyperparameter, we follow \citet{huang2023conformal} in holding out 20\% of the calibration set to select the best hyperparameter for these methods.  As the school dataset is the smallest real-world dataset considered in this work, we run these extra experiments on this dataset alone.

In almost all cases, both block GNN and UGNN achieve valid coverage in the two transductive regimes, with UGNN also mostly being valid in the semi-inductive regime. 

\begin{table}[ht]
\centering

\begin{subtable}{\textwidth}
\centering
\begin{tabular}{|l|l|l|l|}
\hline
Embedding & \multicolumn{3}{c|}{School} \\ 
\cline{2-4}
& Trans. & Semi-ind. & Temp. Trans. \\ 
\hline
Block GCN & \textbf{0.901 $\pm$ 0.012} & 0.821 $\pm$ 0.049 & \textbf{0.913 $\pm$ 0.024} \\ \hline
UGCN & \textbf{0.901 $\pm$ 0.012} & \textbf{0.889 $\pm$ 0.020} & \textbf{0.902 $\pm$ 0.016} \\ \hline
Block GAT & \textbf{0.901 $\pm$ 0.012} & 0.665 $\pm$ 0.097 & \textbf{0.922 $\pm$ 0.027} \\ \hline
UGAT & \textbf{0.901 $\pm$ 0.012} & \textbf{0.907 $\pm$ 0.023} & \textbf{0.903 $\pm$ 0.016} \\ \hline
\end{tabular}
\caption{Coverage for the School experiment using APS.}
\end{subtable}

\begin{subtable}{\textwidth}
\centering
\begin{tabular}{|l|l|l|l|}
\hline
Embedding & \multicolumn{3}{c|}{School} \\ 
\cline{2-4}
& Trans. & Semi-ind. & Temp. Trans. \\ 
\hline
Block GCN & \textbf{0.902 $\pm$ 0.013} & 0.821 $\pm$ 0.049 & \textbf{0.915 $\pm$ 0.027} \\ \hline
UGCN & \textbf{0.901 $\pm$ 0.012} & \textbf{0.890 $\pm$ 0.020} & \textbf{0.903 $\pm$ 0.017} \\ \hline
Block GAT & \textbf{0.902 $\pm$ 0.012} & 0.665 $\pm$ 0.096 & \textbf{0.921 $\pm$ 0.029} \\ \hline
UGAT & \textbf{0.901 $\pm$ 0.013} & \textbf{0.907 $\pm$ 0.024} & \textbf{0.904 $\pm$ 0.017} \\ \hline
\end{tabular}
\caption{Coverage for the School experiment using RAPS.}
\end{subtable}

\begin{subtable}{\textwidth}
\centering
\begin{tabular}{|l|l|l|l|}
\hline
Embedding & \multicolumn{3}{c|}{School} \\ 
\cline{2-4}
& Trans. & Semi-ind. & Temp. Trans. \\ 
\hline
Block GCN & \textbf{0.891 $\pm$ 0.014} & 0.817 $\pm$ 0.050 & 0.226 $\pm$ 0.018 \\ \hline
UGCN & \textbf{0.897 $\pm$ 0.013} & 0.875 $\pm$ 0.023 & \textbf{0.901 $\pm$ 0.017} \\ \hline
Block GAT & 0.855 $\pm$ 0.022 & 0.651 $\pm$ 0.098 & 0.169 $\pm$ 0.097 \\ \hline
UGAT & \textbf{0.895 $\pm$ 0.017} & \textbf{0.900 $\pm$ 0.025} & \textbf{0.898 $\pm$ 0.020} \\ \hline
\end{tabular}
\caption{Coverage for the School experiment using SAPS.}
\end{subtable}

\caption{Coverage values for of the School experiment using different conformal score functions. Values in bold indicate valid coverage with target $\geq 0.90$.}
\end{table}

\begin{table}[ht]
\centering

\begin{subtable}{\textwidth}
\centering
\begin{tabular}{|l|l|l|l|}
\hline
Embedding & \multicolumn{3}{c|}{School} \\ 
\cline{2-4}
& Trans. & Semi-ind. & Temp. Trans. \\ 
\hline
Block GCN & 5.195 $\pm$ 0.156 & \textit{8.174 $\pm$ 0.386} & 9.121 $\pm$ 0.254 \\ \hline
UGCN & \textbf{3.541 $\pm$ 0.314} & \textbf{2.714 $\pm$ 0.326} & \textbf{3.719 $\pm$ 0.239} \\ \hline
Block GAT & \textbf{4.367 $\pm$ 0.798} & \textit{6.567 $\pm$ 0.891} & 9.124 $\pm$ 0.310 \\ \hline
UGAT & 4.480 $\pm$ 0.882 & \textbf{3.024 $\pm$ 0.508} & \textbf{4.503 $\pm$ 0.683} \\ \hline
\end{tabular}
\caption{Average set sizes for the School experiment using APS. }
\end{subtable}

\begin{subtable}{\textwidth}
\centering
\begin{tabular}{|l|l|l|l|}
\hline
Embedding & \multicolumn{3}{c|}{School} \\ 
\cline{2-4}
& Trans. & Semi-ind. & Temp. Trans. \\ 
\hline
Block GCN & 5.205 $\pm$ 0.169 & \textit{8.166 $\pm$ 0.393} & 9.135 $\pm$ 0.273 \\ \hline
UGCN & \textbf{3.549 $\pm$ 0.318} & \textbf{2.728 $\pm$ 0.336} & \textbf{3.735 $\pm$ 0.250} \\ \hline
Block GAT & \textbf{4.377 $\pm$ 0.802} & \textit{6.565 $\pm$ 0.887} & 9.117 $\pm$ 0.322 \\ \hline
UGAT & 4.486 $\pm$ 0.891 & \textbf{3.029 $\pm$ 0.501} & \textbf{4.516 $\pm$ 0.696} \\ \hline
\end{tabular}
\caption{Average set sizes for the School experiment using RAPS. }
\end{subtable}

\begin{subtable}{\textwidth}
\centering
\begin{tabular}{|l|l|l|l|}
\hline
Embedding & \multicolumn{3}{c|}{School} \\ 
\cline{2-4}
& Trans. & Semi-ind. & Temp. Trans. \\ 
\hline
Block GCN & 5.044 $\pm$ 0.162 & \textit{8.130 $\pm$ 0.400} & \textit{1.972 $\pm$ 0.030} \\ \hline
UGCN & \textbf{3.489 $\pm$ 0.329} & \textit{2.593 $\pm$ 0.355} & \textbf{3.719 $\pm$ 0.254} \\ \hline
Block GAT & \textit{3.755 $\pm$ 0.770} & \textit{6.419 $\pm$ 0.936} & \textit{1.570 $\pm$ 0.886} \\ \hline
UGAT & \textbf{4.451 $\pm$ 0.954} & \textbf{2.982 $\pm$ 0.519} & \textbf{4.483 $\pm$ 0.740} \\ \hline
\end{tabular}
\caption{Average set sizes for the School experiment using SAPS. }
\end{subtable}

\caption{The average prediction set size for each method using APS, RAPS and SAPS on the school dataset. Values in bold indicate the smallest valid prediction set size for a given GNN architecture. Values in italics are the set sizes for invalid prediction sets.}
\end{table}

\FloatBarrier

\section{Repeat Experiments using Different GNNs}\label{app:other_gnns}

In section \ref{sec:experiments} we presented experiments using two of the most popular GNN architectures: GCN \citep{kipf2016semi} and GAT \citep{velivckovic2017graph}. However, our theoretical results are not limited by the choice of GNN, so long as that GNN meets assumption A2 (which is light). Below we repeat the school experiment using two different choices of GNN: GraphSAGE \citep{hamilton2017inductive}, and JKNet \citep{xu2018representation}.

Similar to the results in the main text, we see that coverage remains valid under these GNNs. Further, we see that the unfolded embeddings out-perform the block embeddings, both in terms of accuracy and prediction set size, in both the transductive and semi-inductive regimes. 

\begin{table}[ht]
\centering

\begin{subtable}{\textwidth}
\centering
\begin{tabular}{|l|l|l|}
\hline
Embedding & \multicolumn{2}{c|}{School} \\ 
\cline{2-3}
& Trans. & Semi-ind. \\ 
\hline
Block GraphSAGE & 0.856 $\pm$ 0.014 & 0.102 $\pm$ 0.025 \\ \hline
Unfolded GraphSAGE & \textbf{0.932 $\pm$ 0.006} & \textbf{0.917 $\pm$ 0.009} \\ \hline
Block JKNet & 0.885 $\pm$ 0.009 & 0.102 $\pm$ 0.027 \\ \hline
Unfolded JKNet & \textbf{0.930 $\pm$ 0.007} & \textbf{0.909 $\pm$ 0.009} \\ \hline
\end{tabular}
\caption{Accuracy. Bold values indicate higher accuracy.}
\end{subtable}

\begin{subtable}{\textwidth}
\centering
\begin{tabular}{|l|l|l|}
\hline
Embedding & \multicolumn{2}{c|}{School} \\ 
\cline{2-3}
& Trans. & Semi-ind. \\ 
\hline
Block GraphSAGE & \textbf{0.900 $\pm$ 0.012} & \textbf{0.869 $\pm$ 0.072} \\ \hline
Unfolded GraphSAGE & \textbf{0.902 $\pm$ 0.011} & \textbf{0.922 $\pm$ 0.027} \\ \hline
Block JKNet & \textbf{0.901 $\pm$ 0.012} & \textbf{0.897 $\pm$ 0.028} \\ \hline
Unfolded JKNet & \textbf{0.901 $\pm$ 0.012} & \textbf{0.907 $\pm$ 0.027} \\ \hline
\end{tabular}
\caption{Coverage. Bold values indicate validity with target $\geq 0.90$.}
\end{subtable}

\begin{subtable}{\textwidth}
\centering
\begin{tabular}{|l|l|l|}
\hline
Embedding & \multicolumn{2}{c|}{School} \\ 
\cline{2-3}
& Trans. & Semi-ind. \\ 
\hline
Block GraphSAGE & 5.350 $\pm$ 0.230 & 8.632 $\pm$ 0.683 \\ \hline
Unfolded GraphSAGE & \textbf{3.879 $\pm$ 0.978} & \textbf{3.194 $\pm$ 0.659} \\ \hline
Block JKNet & 4.724 $\pm$ 0.332 & 8.946 $\pm$ 0.189 \\ \hline
Unfolded JKNet & \textbf{2.566 $\pm$ 0.292} & \textbf{2.472 $\pm$ 0.477} \\ \hline
\end{tabular}
\caption{Average set size. Bold values indicate smaller set sizes.}
\end{subtable}

\caption{Repeated runs on the school dataset using GraphSAGE and JKNet as the choice of GNN.}
\end{table}

\end{document}